%% file: main.tex
\documentclass[11pt,letterpaper]{article}

\usepackage[T1]{fontenc}
\usepackage{lmodern}
\usepackage[margin=1in]{geometry}
\usepackage{amsmath,amssymb,amsthm}
\usepackage{microtype}
\usepackage{booktabs}
\usepackage[round,authoryear]{natbib}
\usepackage{xcolor}
\usepackage{adjustbox}
\usepackage{placeins}
\usepackage{hyperref}
\hypersetup{
  colorlinks=true,
  linkcolor=blue,
  citecolor=blue,
  urlcolor=blue,
  pdftitle={Fast Last-Iterate Convergence in Zero-Sum Markov Games with Bandit Feedback},
  pdfauthor={Yuheng Zhang}
}
\newtheorem{theorem}{Theorem}
\newtheorem{lemma}[theorem]{Lemma}
\newtheorem{corollary}[theorem]{Corollary}
\newtheorem{assumption}{Assumption}

\newcounter{algorithm}
\newenvironment{algobox}[1]{%
  \begin{figure}[!t]
  \refstepcounter{algorithm}%
  \begin{adjustbox}{max totalsize={\textwidth}{0.92\textheight},center}
  \begin{minipage}{\textwidth}
  \small
  \setlength{\abovedisplayskip}{3pt}
  \setlength{\belowdisplayskip}{3pt}
  \setlength{\abovedisplayshortskip}{3pt}
  \setlength{\belowdisplayshortskip}{3pt}
  \setlength{\parindent}{0pt}
  \hrule\smallskip
  \textbf{Algorithm \thealgorithm. #1}\par\smallskip
  \hrule\smallskip
}{%
  \par\smallskip\hrule
  \end{minipage}
  \end{adjustbox}
  \end{figure}
}

\title{Fast Last-Iterate Convergence in Zero-Sum Markov Games \\ with Bandit Feedback}

\author{Yuheng Zhang\\
University of Illinois Urbana-Champaign\\
\texttt{yuhengz2@illinois.edu}}
\date{}

\begin{document}

\maketitle

\begin{abstract}
We study last-iterate convergence in unknown two-player zero-sum
discounted Markov games with bandit feedback.
The players learn independently along a single trajectory without
observing each other's actions.
We develop Adaptive Regularized TD Learning (ARTD), which achieves a
$\widetilde{\mathcal O}(t^{-1/4})$ duality gap bound for the current
policies under a uniform hitting time assumption, with high probability
simultaneously over all rounds and starting states.
This improves the $\widetilde{\mathcal O}(t^{-1/(9+\nu)})$ rate of
\citet{cai2023uncoupled}, for any fixed $\nu>0$, under the same feedback
model and hitting time assumption.
Our algorithm requires no knowledge of the hitting time bound, the time
horizon, or the confidence level.
To stabilize policy learning as value estimates change, we separate fast
temporal difference averaging from bounded value updates.
We adapt log-barrier regularization to the progress of value estimation,
controlling both policy and value errors throughout learning.
Together, these mechanisms enable fast convergence of the policies
actually played, even when the players learn independently from bandit
feedback.
\end{abstract}

\input{introduction}
\input{prelim}
\input{algorithm}
\input{theory}
\input{related_work}
\input{conclusion}

\subsection*{AI use statement}
We use GPT-6 Astra to polish the writing, assist with calculations
in the proofs, and check their correctness. We review all AI-assisted
content and take full responsibility for the final content of this paper.

\bibliography{ref}
\bibliographystyle{plainnat}

\clearpage
\appendix
\input{proofs}

\end{document}

%% file: introduction.tex
\section{Introduction}
\label{sec:introduction}

Markov games provide a standard framework for multi-agent reinforcement
learning, in which agents' decisions jointly determine rewards and state
transitions \citep{shapley1953stochastic,littman1994markov}.
They model competitive tasks such as board games and real-time strategy
games, where each player's actions affect both players' immediate payoffs
and the evolution of the game \citep{wei2021markov}.

We study last-iterate convergence in unknown two-player zero-sum
discounted Markov games. The players interact along a single trajectory:
at each round, they choose actions at the current state, receive their
own payoffs, and observe the next state. Both the reward function and
transition dynamics are unknown. Each player learns independently from
its own observations, without observing the opponent's actions or
policies and without communication or shared randomness. This
\emph{uncoupled} learning model asks whether individually implemented
learning rules can lead the two players to a Nash equilibrium
\citep{wei2021markov,sayin2021decentralized,cai2023uncoupled}.

Many sample-efficient methods for learning Markov games provide
convergence guarantees for averages or mixtures of past policies
\citep{bai2020near,jin2024vlearning}.
However, these guarantees do not imply that the policies actually played
converge to equilibrium.
Achieving this stronger guarantee, known as \emph{last-iterate convergence},
is challenging because each player must learn from noisy feedback against
a changing opponent, without averaging away fluctuations in its policies.

Prior work establishes last-iterate guarantees under several feedback
models. With full gradient feedback, optimistic and extragradient
methods converge in matrix games and related saddle point problems
\citep{daskalakis2019last,wei2021linear,cai2022tight}. Under bandit feedback, \citet{fiegel2026optimal} obtain a
$\widetilde{\mathcal O}(t^{-1/4})$ high-probability rate for uncoupled
learning in matrix games using log-barrier regularization.
Achieving the same rate in Markov games is substantially more challenging:
players must learn policies while estimating future rewards from a single
trajectory that visits states unevenly. 

For Markov games with full information about rewards and transitions,
entropy-regularized policy optimization also gives finite-time
last-iterate guarantees \citep{cen2021fast,cen2023faster}.
For unknown Markov games, decentralized methods based on observed rewards
and state transitions provide guarantees for the final policies of
batched or nested learning procedures
\citep{wei2021markov,chen2024decentralized}. Our work is most closely related to \citet{cai2023uncoupled}.
They establish uncoupled last-iterate convergence along a single
trajectory under a uniform hitting time assumption: under every
stationary policy pair, the expected time to reach any state from any
other state is at most some finite $L$.
Their rate is $\widetilde{\mathcal O}(t^{-1/(9+\nu)})$ for any fixed
$\nu>0$.
This leaves the following question:
\begin{quote}
\emph{Can uncoupled learners achieve a
$\widetilde{\mathcal O}(t^{-1/4})$ last-iterate rate in unknown Markov
games along a single trajectory, under the same hitting time assumption?}
\end{quote}

\paragraph{Our results.}
We answer this question affirmatively with Adaptive Regularized TD
Learning (ARTD; Algorithm~\ref{alg:td-learning}).
Under the uniform hitting time assumption, when both players run ARTD,
for every $\delta\in(0,1)$, with probability at least $1-\delta$,
their current policies satisfy
\[
\operatorname{Gap}(x_t,y_t)
=\widetilde{\mathcal O}\!\left(
\frac{(A+B)(L+1)^{1/4}}
{(1-\gamma)^{9/4}(t+1)^{1/4}}
\right)
\qquad\text{for all }t\ge0,
\]
where $x_t,y_t$ are the policies used at round $t$, $A,B$ are the
players' action counts, and $\gamma$ is the discount factor
(Theorem~\ref{thm:last-iterate}). For fixed game parameters, our algorithm improves the iteration
complexity for reaching an $\varepsilon$-Nash equilibrium from
$\widetilde{\mathcal O}(\varepsilon^{-(9+\nu)})$ to
$\widetilde{\mathcal O}(\varepsilon^{-4})$.
It also improves the hitting time dependence in the duality gap bound
from $L^{2+1/\nu}$ to $(L+1)^{1/4}$.

These guarantees hold uniformly over starting states, and the number of
states enters only through logarithmic factors beyond its effect on $L$.
ARTD requires no knowledge of $L$, the time horizon, or the confidence
level.
Beyond self-play, our algorithm is rational: against an arbitrary
opponent whose policy converges, it almost surely approaches the best
response value for that limiting policy
(Corollary~\ref{cor:rationality}).
Table~\ref{tab:last-iterate-comparison} compares last-iterate guarantees
under bandit feedback.

\begin{table}[t]
\centering
\small
\setlength{\tabcolsep}{4pt}
\renewcommand{\arraystretch}{1.2}
\caption{Last-iterate guarantees under bandit feedback in two-player
zero-sum discounted Markov games. Here $t$ counts transition samples;
fixed game parameters and logarithmic factors are suppressed, and
$\nu>0$ is fixed.
\protect\citet{wei2021markov}, \protect\citet{cai2023uncoupled}, and ARTD
use the same uniform hitting assumption (Assumption~\ref{ass:hitting});
\protect\citet[Assumption~3.2]{chen2024decentralized} additionally require
aperiodicity under every deterministic stationary policy pair.
The first two rates use parameters tuned to the sample budget.
\protect\citet{wei2021markov} also require a hitting time bound, whereas
\protect\citet{cai2023uncoupled} and ARTD do not.
For \protect\citet[Corollary~5]{wei2021markov}, we convert the bound on squared
policy distance to a duality gap bound.
\protect\citet[Theorem~3.5]{chen2024decentralized} bound the expected gap
under a fixed initial distribution; the other guarantees are uniform over
starting states. \emph{All rounds} means simultaneously on one
high-probability event.}
\label{tab:last-iterate-comparison}
\begin{tabular*}{\linewidth}{@{\extracolsep{\fill}}lll@{}}
\toprule
Work & Convergence guarantee & Duality gap bound \\
\midrule
\citet{wei2021markov}
& Final outer iterate; high probability
& $\widetilde{\mathcal O}(t^{-1/8})$ \\
\addlinespace
\citet{chen2024decentralized}
& Final outer iterate; in expectation
& $\widetilde{\mathcal O}(t^{-1/8})$ \\
\addlinespace
\citet{cai2023uncoupled}
& All rounds; high probability
& $\widetilde{\mathcal O}(t^{-1/(9+\nu)})$ \\
\midrule
\textbf{ARTD (this work)}
& \textbf{All rounds; high probability}
& $\widetilde{\mathcal O}(t^{-1/4})$ \\
\bottomrule
\end{tabular*}
\end{table}

\paragraph{Technical ideas.}
ARTD combines log-barrier policy updates with temporal difference (TD)
value estimation and an adaptive regularization schedule.
A key difficulty in \citet{cai2023uncoupled} is tracking equilibria of
regularized local games whose payoffs change with the value estimates,
even between consecutive visits to the same state.
Slowing value updates stabilizes policy learning but also delays value
estimation. Our algorithm separates these two roles.
A fast moving average estimates TD targets, while
clipped corrections update the values used in policy learning only after
every state has supplied a transition.
This limits both the size and frequency of target changes while allowing
faster statistical estimation.

Our analysis further replaces the moving equilibrium comparator with a
reference policy that updates using exact payoffs, so its movement is
controlled directly by the update size.
Log-barrier contraction then controls the played policies' tracking error
together with the TD estimation error.
To make this local control imply convergence in the Markov game, we adapt
the regularization level to the observed TD discrepancy: when value
estimates lag behind, regularization decreases more slowly.
This lets us bound both the local duality gaps and the Bellman residual
throughout learning.
Bellman contraction and the hitting time assumption then convert these
bounds into the $\widetilde{\mathcal O}(t^{-1/4})$ guarantee along the
trajectory.

%% file: prelim.tex
\section{Preliminaries}
\label{sec:prelim}

\paragraph{Notation.}
For a finite set $\mathcal E$, let $\Delta(\mathcal E)$ denote the set of
probability distributions on $\mathcal E$.
For a vector $v\in\mathbb R^{\mathcal S}$, write
$\|v\|_\infty=\max_{s\in\mathcal S}|v(s)|$ and
$\operatorname{span}(v)=\max_{s\in\mathcal S}v(s)-\min_{s\in\mathcal S}v(s)$.
All logarithms are natural, and $\widetilde{\mathcal O}$ suppresses logarithmic
factors.

\paragraph{Discounted zero-sum Markov games.}
We consider a two-player zero-sum Markov game with a finite state set
$\mathcal S$, finite action sets $\mathcal A$ and $\mathcal B$, and discount
factor $\gamma\in[0,1)$.
Write $S=|\mathcal S|$, $A=|\mathcal A|$, and $B=|\mathcal B|$.
At state $s$, the players choose actions $a\in\mathcal A$ and
$b\in\mathcal B$.
The first player's expected reward is $r(s,a,b)\in[0,1]$, and the next state is drawn from
$P(\cdot\mid s,a,b)\in\Delta(\mathcal S)$.
The first player maximizes the discounted reward, while the second player
minimizes it.
Both $r$ and $P$ are unknown; the state and action sets and $\gamma$ are known.

A stationary policy assigns an action distribution to each state.
Let $\Pi_x=\Delta(\mathcal A)^{\mathcal S}$ and
$\Pi_y=\Delta(\mathcal B)^{\mathcal S}$ be the policy spaces of the two players.
For $(x,y)\in\Pi_x\times\Pi_y$, define
\[
V^{x,y}(s)
=\mathbb E_{x,y}\!\left[
  \sum_{k=0}^{\infty}\gamma^k r(s_k,a_k,b_k)
  \,\middle|\,s_0=s
\right],
\]
where $a_k\sim x(\cdot\mid s_k)$ and $b_k\sim y(\cdot\mid s_k)$ are sampled
independently, and states evolve according to $P$.
Thus $V^{x,y}(s)\in\bigl[0,\frac{1}{1-\gamma}\bigr]$.
A pair $(x^\star,y^\star)$ is a Nash equilibrium if, for every state $s$ and
policies $x\in\Pi_x$, $y\in\Pi_y$,
\[
V^{x,y^\star}(s)
\le V^{x^\star,y^\star}(s)
\le V^{x^\star,y}(s).
\]
Such a game admits a stationary Nash equilibrium.
All Nash equilibria have the same value, denoted by $V^\star(s)$.

\paragraph{Uncoupled learning with trajectory feedback.}
Without loss of generality, we fix an initial state $s_0\in\mathcal S$.
The players learn along a single trajectory starting from $s_0$.
At round $t=0,1,\ldots$, they use policies $x_t\in\Pi_x$ and $y_t\in\Pi_y$ and sample $a_t\sim x_t(\cdot\mid s_t)$ and $b_t\sim y_t(\cdot\mid s_t)$ using
independent private randomness.
The environment generates a reward $r_t\in[0,1]$ and a next state
$s_{t+1}$.
Let $\mathcal F_t$ contain the history, the current state, and the policies
before the actions are sampled.
Conditionally on $\mathcal F_t,a_t,b_t$, the feedback satisfies
\[
\mathbb E[r_t\mid\mathcal F_t,a_t,b_t]=r(s_t,a_t,b_t),
\qquad
\mathbb P(s_{t+1}=s'\mid\mathcal F_t,a_t,b_t)
=P(s'\mid s_t,a_t,b_t).
\]
The players receive opposite payoffs $r_t$ and $-r_t$, respectively, so each
can recover $r_t$ from its own payoff.
Each player observes the next state and its own sampled action.

We follow the uncoupled learning setting of \citet{cai2023uncoupled}.
Each player's policy updates use only its own actions, realized payoffs,
and the observed states.
Neither player observes the opponent's actions or policies, and the players
do not communicate or use a shared random seed.

\paragraph{State visitation.}
Following \citet{cai2023uncoupled}, we impose a uniform bound on expected
hitting times.
For a target state $s'$, let
$T_{s'}=\inf\{k\ge0:s_k=s'\}$ for a trajectory initialized at $s_0=s$.

\begin{assumption}[Uniform hitting time]
\label{ass:hitting}
There exists a finite $L\ge0$ such that, for every stationary policy pair
$(x,y)\in\Pi_x\times\Pi_y$ and states $s,s'\in\mathcal S$,
\[
\mathbb E_{x,y}[T_{s'}\mid s_0=s]\le L.
\]
\end{assumption}

Assumption~\ref{ass:hitting} ensures that every state remains reachable
under every stationary policy pair.
The bound $L$ is unknown to the learners.

\paragraph{Last-iterate convergence.}
We measure the quality of a policy pair by its duality gap
\[
\operatorname{Gap}(x,y)
=\max_{s\in\mathcal S}
\left\{
  \max_{x'\in\Pi_x}V^{x',y}(s)
  -\min_{y'\in\Pi_y}V^{x,y'}(s)
\right\}.
\]
For each state, the expression in braces is the sum of the two players'
possible gains from unilateral deviations.
In particular, $\operatorname{Gap}(x,y)=0$ exactly at Nash equilibria.
The policies may change from round to round; at round $t$, we evaluate the
stationary pair $(x_t,y_t)$ used to sample that round's actions.
The maximum over states gives a guarantee uniform over starting states.

Our goal is an anytime algorithm whose updates are independent of the time
horizon and confidence level and which, for every $\delta\in(0,1)$, satisfies
\[
\mathbb P\!\left(
  \operatorname{Gap}(x_t,y_t)\le\varepsilon_t(\delta)
  \text{ for all }t\ge0
\right)\ge1-\delta,
\]
where $\varepsilon_t(\delta)$ decreases to zero at a polynomial rate in $t$.

\paragraph{Rationality.}
We also require each learner to approach a best response whenever the
opponent's policy converges.
For the first player, if $y_t$ converges statewise almost surely to a
stationary policy $y_\infty$, rationality means
\[
\max_{s\in\mathcal S}
\left\{
  \max_{x\in\Pi_x}V^{x,y_\infty}(s)
  -V^{x_t,y_\infty}(s)
\right\}\longrightarrow0
\qquad\text{almost surely}.
\]
The second player's requirement is symmetric.

%% file: algorithm.tex
\section{Algorithm Design}
\label{sec:algorithm}

\citet{cai2023uncoupled} maintain a policy and a value estimate at each
state. Their policies learn regularized local games whose payoffs combine
immediate rewards with estimated future values, while temporal difference
(TD) updates refine the value estimates using observed rewards and
transitions. These estimates both average noisy samples and define the
local games, coupling statistical estimation with the need to keep the
policy learning targets slowly varying. We separate these roles: a fast
moving average estimates the TD target, while bounded value updates
control the motion of the local games. Within these slowly changing games, we use
log barrier regularization to stabilize policy updates from bandit
feedback. Our \emph{Adaptive Regularized TD Learning} (ARTD) algorithm
uses the TD discrepancy to determine how quickly to decrease the
regularization level, keeping policy learning and value estimation at a
common accuracy scale.

\paragraph{From local games to a Markov game equilibrium.}
We first identify what the policies and value estimates need to achieve
to give a small duality gap in the Markov game.
Let $w\in[0,1]^{\mathcal S}$ estimate the normalized equilibrium value
$(1-\gamma)V^\star$. If we use $w$ to evaluate future rewards, the immediate
decision at state $s$ becomes a matrix game with payoff matrix
\[
M_s(w)_{ab}
=(1-\gamma)r(s,a,b)
 +\gamma\sum_{s'\in\mathcal S}P(s'\mid s,a,b)w(s').
\]
The estimate $w(s)$ should agree with the value of the local game
$M_s(w)$. We express this consistency condition through the Bellman operator
\[
(\mathcal T w)(s)
=\max_{u\in\Delta(\mathcal A)}\min_{v\in\Delta(\mathcal B)}
  u^\top M_s(w)v.
\]
The fixed point of $\mathcal T$ is $(1-\gamma)V^\star$, so
$\|\mathcal T w-w\|_\infty$ measures the error in this consistency condition.

The policies must also approximately solve the local games defined by $w$.
Writing $x_s=x(\cdot\mid s)$ and $y_s=y(\cdot\mid s)$, we measure their
accuracy by the largest local duality gap
\[
\Delta(x,y;w)
=\max_{s\in\mathcal S}
\left\{
 \max_{u\in\Delta(\mathcal A)}u^\top M_s(w)y_s
 -\min_{v\in\Delta(\mathcal B)}x_s^\top M_s(w)v
\right\}.
\]
These two errors together control the quality of the current policies:
Bellman contraction gives
\begin{equation}
\label{eq:local-to-global}
\operatorname{Gap}(x,y)
\le \frac{2}{(1-\gamma)^2}\bigl(\Delta(x,y;w)+\|\mathcal T w-w\|_\infty\bigr).
\end{equation}
This bound guides the design: policy updates should reduce the local
duality gaps, while value updates should reduce the Bellman residual.
We now construct these updates from the observed trajectory.

\paragraph{Allowing every state to learn before changing its target.}
We begin by deciding when to change $w$, since each change also modifies
the local games that the policies are learning.
Along a single trajectory, states are visited at different rates.
Updating $w$ after every transition could therefore change the payoff
matrix $M_s(w)$ at a rarely visited state $s$ repeatedly between two
policy updates at that state.
We divide the trajectory into \emph{covers}: each cover ends once every
state has supplied at least one transition sample.
We initialize $w(s)=1/2$ for every state and set the regularization
temperature to $\tau=512(1-\gamma)$.
Here $1-\gamma$ bounds the initial payoff range in each local game,
and $512$ is a sufficiently large numerical constant.
Within a cover, we hold $w$ and $\tau$ fixed and update each policy
whenever its state is visited. We revise $w$ and $\tau$ at the end of
each cover, as described below.
This gives every state at least one policy update under the current
payoff matrix and regularization level before either changes.

\paragraph{Learning a local game from one sample.}
With $w$ fixed within a cover, we can learn each local game from the
transitions observed at its state. At round $t$, the reward and next state
give the normalized TD target
\[
Y_t=(1-\gamma)r_t+\gamma w(s_{t+1}).
\]
Given the history and sampled actions, $Y_t$ has mean
$M_{s_t}(w)_{a_t b_t}$, so it supplies a sample of the local payoff.
We use the TD error $Y_t-w(s_t)$ as the learning signal.
Subtracting $w(s_t)$ preserves payoff differences between actions and gives
the bound
\[
|Y_t-w(s_t)|\le 1-\gamma+\gamma\operatorname{span}(w).
\]
We normalize the feedback by this observable range together with the
temperature. At the start of each cover, set
\[
q=1-\gamma+\gamma\operatorname{span}(w)+\tau,
\qquad \theta=\frac{\tau}{q},
\qquad \varepsilon=\min\{1,(A+B)\theta\}.
\]
Here $\tau$ is the regularization temperature held fixed during the cover.
Adding $\tau$ to the TD error bound ensures that the relative
regularization level $\theta=\tau/q$ lies in $(0,1)$.
The quantity $\varepsilon$ is the corresponding local accuracy scale.
Using the span of $w$ allows the update to adapt to the variation in
continuation values across states.

We next turn this scalar feedback into a direction for updating a policy.
For a player with $d$ actions, indexed by $[d]$, write
$p_s\in\Delta_d:=\Delta([d])$ for its policy at state $s$.
Thus $p=x$ and $d=A$ for the first player, while $p=y$ and $d=B$ for
the second player.
After sampling its own action $i_t\sim p_{s_t}$, the player forms
\begin{equation}
\label{eq:bandit-estimate}
\ell_t=
\begin{cases}
 -(Y_t-w(s_t))/q, & \text{first player},\\
 \phantom{-}(Y_t-w(s_t))/q, & \text{second player},
\end{cases}
\qquad
\widehat g_{t,i}=\frac{\ell_t\mathbf 1\{i=i_t\}}{p_{s_t,i}}.
\end{equation}
This gives an unbiased estimate of each action's expected centered loss. Both players can therefore use the same minimization update.

Small action probabilities amplify the estimate in
\eqref{eq:bandit-estimate}. We use the log barrier regularizer, whose
curvature increases near the boundary of the simplex, together with an
explicit exploration floor $h>0$.
Define
\[
\Psi(u)=-\sum_{i=1}^d\log u_i,
\qquad
\Delta_d(h)=\{u\in\Delta_d:u_i\ge h\text{ for all }i\},
\]
and let
$D_\Psi(u,p)=\Psi(u)-\Psi(p)-\langle\nabla\Psi(p),u-p\rangle$.
The policy update is regularized mirror descent:
\begin{equation}
\label{eq:actor-update}
p_{s_t}^{+}
=\arg\min_{u\in\Delta_d(h)}
\left\{
 \alpha\langle\widehat g_t,u\rangle
 +\alpha\theta\Psi(u)+D_\Psi(u,p_{s_t})
\right\}.
\end{equation}
The linear term responds to the sampled loss, the regularizer stabilizes
learning in the local zero-sum game, and the divergence controls the
change in policy. The floor supplies a quantitative exploration guarantee.
As the temperature decreases, the regularization and exploration biases
decrease with the desired accuracy.

\paragraph{Estimating the value of the current play.}
The policy update addresses the local duality gap in
\eqref{eq:local-to-global}. To reduce the Bellman residual as well, we
need an estimate of $\mathcal T w$.
The TD samples have conditional mean
$\mu_s=x_s^\top M_s(w)y_s$ under the current policies, and
\[
|\mu_s-(\mathcal T w)(s)|\le\Delta(x,y;w).
\]
Thus, as the policies learn their local games, the mean of their observed
TD targets approaches the Bellman target.
We estimate this mean with a moving average $z$, updated using the same
sample as the policy:
\begin{equation}
\label{eq:td-average}
z^{+}(s_t)=(1-\kappa)z(s_t)+\kappa Y_t.
\end{equation}
Since $\mu_s$ changes as the players learn, the averaging rate $\kappa$
must balance sampling noise against the need to track these changes.
We therefore choose the learning rates jointly, with $\beta$ denoting
the step size for value corrections at cover boundaries.
For a sufficiently small universal constant $c>0$, set
\begin{equation}
\label{eq:learning-rates}
h=\frac{c\varepsilon}{A+B},
\qquad \kappa=c\varepsilon^2,
\qquad \alpha=c\theta\kappa,
\qquad \beta=c\theta\alpha.
\end{equation}
The total exploration mass across the two players is at most
$(A+B)h=c\varepsilon$.
For the moving average in \eqref{eq:td-average}, the sampling fluctuation has scale
$q\sqrt\kappa$; choosing $\kappa$ proportional to $\varepsilon^2$ places
this fluctuation at the target scale $(A+B)\tau$.
The smaller actor step $\alpha=c\theta\kappa$ lets $z$ track the changing
mean of the current play.
Finally, $\beta=c\theta\alpha$ keeps value updates slow enough for the
policies to track the resulting changes in the local games.
All four parameters in \eqref{eq:learning-rates} remain fixed throughout a cover.

\paragraph{Updating the value estimate and regularization level.}
At a cover boundary, $z-w$ is the estimated TD discrepancy.
To keep the local games within the actor's tracking accuracy, we bound
each coordinate's value correction by $\beta\tau$.
Let $\operatorname{clip}(v,-\tau,\tau)$ truncate each coordinate of $v$
to the interval $[-\tau,\tau]$. We update
\begin{equation}
\label{eq:critic-update}
w^{+}=w+\beta\operatorname{clip}(z-w,-\tau,\tau).
\end{equation}
Averaging the TD samples first makes the bounded correction follow an
estimate of their mean. Equation~\eqref{eq:critic-update} then guarantees
$\|w^{+}-w\|_\infty\le\beta\tau$, even when the current value discrepancy
is large.

The regularization level should decrease slowly enough for value
learning to keep up. Since the value update clips each coordinate of
$z-w$ at magnitude $\tau$, we use the largest discrepancy
$\|z-w\|_\infty$ to determine how quickly to reduce $\tau$:
\begin{equation}
\label{eq:temperature-update}
\tau^{+}
=\tau\left(
1-\frac{\beta}{2\bigl(\frac{1}{1-\gamma}+\sqrt{A+B}\bigr)}
       \frac{\tau}{\max\{\tau,\|z-w\|_\infty\}}
\right).
\end{equation}
When $\|z-w\|_\infty$ is large relative to $\tau$, we decrease $\tau$
more slowly, giving the value estimate more time to improve.
As this relative discrepancy shrinks, we decrease $\tau$ by a larger
percentage.
The factor $\frac{1}{1-\gamma}+\sqrt{A+B}$ accounts for how discounting affects the progress
of Bellman updates and how changes in temperature affect the regularized
local games.
Once $z$ approximates the Bellman target, a sufficiently large Bellman residual receives a correction
of size $\beta\tau$, while its Bellman target moves by at most
$\gamma\beta\tau$. The resulting progress is $(1-\gamma)\beta\tau$.
The factor $1/(1-\gamma)$ in \eqref{eq:temperature-update} slows the
decrease in $\tau$ accordingly, giving the Bellman residual time to
reach the smaller target error.

\paragraph{Pseudocode and implementation.}
Algorithm~\ref{alg:td-learning} gives the pseudocode of ARTD for either player.
In addition to the initialization of $w$ and $\tau$ above, we initialize
every policy uniformly and set $z(s)=1/2$ for every state.
Both players compute $z,w,\tau$ from the same observed states and
recovered rewards $r_t$, so the common initialization gives identical
estimates throughout learning.
The rates are determined by the current estimates and known game
dimensions, making the algorithm anytime and independent of $L$ and
the confidence level.

\begin{algobox}{Adaptive Regularized TD Learning (ARTD)}
\label{alg:td-learning}
\textbf{Input:} state set $\mathcal S$, action counts $A,B$, discount
$\gamma$, and the player index (first or second).\\
\textbf{Initialize:} Let $d$ be the player's
action count. Fix a sufficiently small universal constant $c>0$.
For every state $s$, set $p_s=(1/d,\ldots,1/d)$ and $z(s)=w(s)=1/2$.
Set $\tau=512(1-\gamma)$, $t=0$, and observe $s_0$.
\begin{enumerate}
\setlength{\itemsep}{3pt}
\setlength{\parskip}{0pt}
\item \textbf{Repeat for each cover:}
\begin{enumerate}
\setlength{\itemsep}{3pt}
\item Set $q=1-\gamma+\gamma\operatorname{span}(w)+\tau$,
$\theta=\tau/q$, and $\varepsilon=\min\{1,(A+B)\theta\}$.
Set $h=c\varepsilon/(A+B)$, $\kappa=c\varepsilon^2$,
$\alpha=c\theta\kappa$, and $\beta=c\theta\alpha$.
Initialize the set of covered states $\mathcal C=\varnothing$.
\item \textbf{While $\mathcal C\ne\mathcal S$:}
\begin{enumerate}
\setlength{\itemsep}{3pt}
\item Sample own action $i_t\sim p_{s_t}$; observe own payoff and
$s_{t+1}$, and recover $r_t$.
\item Set $Y_t=(1-\gamma)r_t+\gamma w(s_{t+1})$ and
\[
\ell_t=
\begin{cases}
 -(Y_t-w(s_t))/q, & \text{first player},\\
 \phantom{-}(Y_t-w(s_t))/q, & \text{second player}.
\end{cases}
\]
Form $\widehat g_{t,i}=\ell_t\mathbf 1\{i=i_t\}/p_{s_t,i}$
using the sampling probabilities.
\item Update only the visited state's policy and TD estimate:
\[
\begin{split}
p_{s_t}&\leftarrow
\arg\min_{u\in\Delta_d(h)}
\{\alpha\langle\widehat g_t,u\rangle
  +\alpha\theta\Psi(u)+D_\Psi(u,p_{s_t})\},\\
z(s_t)&\leftarrow(1-\kappa)z(s_t)+\kappa Y_t.
\end{split}
\]
\item Set $\mathcal C\leftarrow\mathcal C\cup\{s_t\}$ and
$t\leftarrow t+1$.
\end{enumerate}
\item Using the current $z$ and the old $w,\tau$, compute simultaneously
\[
\begin{split}
w^{+}&=w+\beta\operatorname{clip}(z-w,-\tau,\tau),\\
\tau^{+}&=\tau\left(1-
\frac{\beta\tau}{2\bigl(\frac{1}{1-\gamma}+\sqrt{A+B}\bigr)\max\{\tau,\|z-w\|_\infty\}}
\right).
\end{split}
\]
Set $(w,\tau)\leftarrow(w^{+},\tau^{+})$.
\end{enumerate}
\end{enumerate}
\end{algobox}

The policy update in \eqref{eq:actor-update} can be computed by solving
a scalar equation. For the policy $p=p_{s_t}$ before the update, set
\[
v_i=\frac{1+\alpha\ell_t\mathbf 1\{i=i_t\}}
           {(1+\alpha\theta)p_i}.
\]
The solution is
\[
p_i^{+}=\max\{h,(v_i+\lambda)^{-1}\},
\qquad \sum_{i=1}^d p_i^{+}=1,
\]
where the scalar $\lambda>-\min_i v_i$ can be found by bisection.
Over $T$ rounds, each player uses $\mathcal O(Sd+Tdm)$ time with $m$
bisection iterations per policy update.
Initializing the policies costs $\mathcal O(Sd)$, and each bisection
iteration costs $\mathcal O(d)$.
The value and temperature updates cost $\mathcal O(S)$ per cover;
since each cover contains at least $S$ rounds, their total cost is
$\mathcal O(T)$.
\FloatBarrier

%% file: theory.tex
\section{Theoretical Guarantees}
\label{sec:theory}

We first give a last-iterate guarantee when both players run ARTD.
We then establish rationality for either
player facing an opponent whose policy converges.

\begin{theorem}[Last-iterate convergence]
\label{thm:last-iterate}
Suppose Assumption~\ref{ass:hitting} holds and both players run
ARTD (Algorithm~\ref{alg:td-learning}) with a sufficiently small universal
constant $c>0$. Then, for every
$\delta\in(0,1)$, with probability at least $1-\delta$,
simultaneously for every integer $t\ge0$,
\begin{equation}
\label{eq:main-rate}
\operatorname{Gap}(x_t,y_t)
=\widetilde{\mathcal O}\!\left(
\frac{(A+B)(L+1)^{1/4}}
{(1-\gamma)^{9/4}(t+1)^{1/4}}
\right).
\end{equation}
Here $\widetilde{\mathcal O}$ hides logarithmic factors in
$S,t+2,1/\delta$, and $1/(1-\gamma)$.
\end{theorem}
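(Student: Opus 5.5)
The proof reduces the global gap to local quantities via \eqref{eq:local-to-global}. At any round $t$ inside a cover with parameters $(w,\tau)$,
\[
\operatorname{Gap}(x_t,y_t)\le\frac{2}{(1-\gamma)^2}\bigl(\Delta(x_t,y_t;w)+\|\mathcal Tw-w\|_\infty\bigr).
\]
The aim is therefore to show that on one high-probability event, for every cover, both $\Delta(x_t,y_t;w)$ and $\|\mathcal Tw-w\|_\infty$ are $\widetilde{\mathcal O}((A+B)\tau)$, up to factors polynomial in $1/(1-\gamma)$. Separately, $\tau$ at round $t$ must be small in terms of $t$.

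\textbf{Step 1: concentration on covers.} The high-probability event is built first. Under Assumption~\ref{ass:hitting}, applied to the stationary pair frozen at each round, the expected time to hit any state is at most $L$. A Markov-inequality argument over blocks of length $2L$, combined with a union bound over states, shows that every cover has length $\mathcal O((L+1)\log(St/\delta))$ simultaneously for all covers; the union over covers uses a $\delta/k^2$ allocation. Freedman-type martingale bounds then control the moving average $z$ in \eqref{eq:td-average}. Its noise is of order $q\sqrt{\kappa\log(\cdot)}$, which with $\kappa=c\varepsilon^2$ is of order $\sqrt c\,(A+B)\tau$ up to logarithms. Similar martingale bounds control the bandit-estimate noise in the log-barrier update, which is bounded thanks to the floor $h$.

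\textbf{Step 2: local tracking by induction over covers.} I would introduce, as sketched in the introduction, a reference policy pair that runs the same regularized mirror descent \eqref{eq:actor-update} with exact payoffs $M_s(w)$. Its motion per cover is $\mathcal O(\alpha)$, and the local game moves only by $\beta\tau$ per cover. Log-barrier strong convexity at temperature $\alpha\theta$ gives contraction of the divergence $D_\Psi$ between the played and reference policies, and between the reference policies and the regularized equilibrium. Because $\alpha=c\theta\kappa$ and $\beta=c\theta\alpha$, the drift from $w$ and $\tau$ changes is a $c$-fraction of the contraction. By induction, each local gap is at most $\mathcal O(\varepsilon q)$, which is $\mathcal O((A+B)\tau)$; this includes the regularization and exploration bias.

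In parallel, $z$ tracks $\mu_s$, and $|\mu_s-\mathcal Tw(s)|\le\Delta$, so $\|z-\mathcal Tw\|_\infty=\mathcal O((A+B)\tau)$. For the Bellman residual, the critic update \eqref{eq:critic-update} moves $w$ toward $z$ by at most $\beta\tau$ per cover. Since $\mathcal T$ is a $\gamma$-contraction, the residual decreases by about $(1-\gamma)\beta\tau$ while it exceeds $C(A+B)\tau$. The adaptive schedule \eqref{eq:temperature-update} then matters. When $\|z-w\|_\infty\gg\tau$, the decay of $\tau$ is slowed by exactly the ratio $\tau/\|z-w\|_\infty$, so the residual-to-$\tau$ ratio cannot grow. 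The invariant to carry is $\|\mathcal Tw-w\|_\infty\le C'(\frac1{1-\gamma}+\sqrt{A+B})\tau$, and I expect proving this invariant to be the main obstacle. The same potential argument should also bound $\operatorname{span}(w)$, and hence $q$.

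\textbf{Step 3: rate.} Given the invariant, the number of covers $K$ needed to drive $\tau$ to a level $\tau$ satisfies $\sum\beta\tau^{-1}\cdot\mathrm{poly}\lesssim$, using $\beta\asymp c^3\theta^2\varepsilon^2$ with $\theta\asymp\tau/(1-\gamma)$. This gives $\tau_K\approx(K)^{-1/4}$ times powers of $(1-\gamma)$ and $(A+B)$. Substituting $K\gtrsim t/((L+1)\log)$ from Step 1 yields $\tau\lesssim((L+1)/t)^{1/4}$. Inserting this into the displayed bound and collecting the powers of $1-\gamma$ gives \eqref{eq:main-rate}.
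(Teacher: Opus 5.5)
Your architecture matches the paper's: an exact-payoff reference policy, log-barrier contraction, fast TD averaging, clipped corrections with the adaptive temperature, and cover counting. However, the step you flag as the main obstacle is exactly the missing idea, and the invariant you propose is not what the mechanism delivers.

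The residual settles at the size of the target error $\|z-\mathcal Tw\|_\infty\le e_0\tau$, where $e_0\asymp(A+B)\sqrt{\Lambda_t}$. This error can genuinely be of order $(A+B)\tau$: the log-barrier bias alone shifts $\mu_s$ that far from $(\mathcal Tw)(s)$ when a player has many dominated actions. So a bound of order $(\frac1{1-\gamma}+\sqrt{A+B})\tau$ fails when $A+B\gg\frac1{1-\gamma}$. Conversely, when $\frac1{1-\gamma}$ dominates, your factor $\frac1{1-\gamma}$, multiplied by the $\frac2{(1-\gamma)^2}$ of \eqref{eq:local-to-global}, costs an extra power of $\frac1{1-\gamma}$ in the rate.

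What works is that the potential $[\|\mathcal Tw-w\|_\infty-Q\tau]_+$ with $Q=8(e_0+1)$ is nonincreasing at every cover boundary. A coordinate whose residual is at least $(e_0+1)\tau$ has $z-w$ of the same sign and magnitude at least $\tau$. It therefore moves by exactly $\beta\tau$, while $\mathcal Tw$ moves by at most $\gamma\beta\tau$. Meanwhile $\|z-w\|_\infty\ge(u-e_0)\tau$ with $u=\|\mathcal Tw-w\|_\infty/\tau$. Hence \eqref{eq:temperature-update} lowers the threshold $Q\tau$ by at most $\frac{Q(1-\gamma)\beta\tau}{2(u-e_0)}\le(1-\gamma)\beta\tau$ once $u\ge Q$. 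So the $\frac1{1-\gamma}$ belongs in the schedule, not in the invariant. The resulting bound $\|z-w\|_\infty\le(Q+e_0)\tau$ is what lower-bounds the per-cover relative decrease of $\tau$.

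Your count also uses $\theta\asymp\tau/(1-\gamma)$, which is false in general. Under Assumption~\ref{ass:hitting}, the span of $(1-\gamma)V^\star$ can be of order $\min\{1,L(1-\gamma)\}$, e.g.\ two states with rewards $1$ and $0$ that switch with probability $1/L$ per step. No residual potential can force $\operatorname{span}(w)=O(1-\gamma)$. The stated bound needs only $q\le513$, giving $\beta\ge c^3(A+B)^2\tau^4/513^4$ for $\tau\le1/(A+B)$. Then the number of covers starting above $u$ is of order $\frac{((1-\gamma)^{-1}+\sqrt{A+B})H}{A+B}u^{-4}$ with $H\asymp\sqrt{\Lambda_t}$. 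To finish:
\begin{itemize}
\item choose $u\asymp g(1-\gamma)^2/((A+B)H)$;
\item multiply by the cover length $\widetilde{\mathcal O}(L+1)$;
\item use $\frac1{1-\gamma}+\sqrt{A+B}\le\frac{2(A+B)}{1-\gamma}$.
\end{itemize}
This produces $(A+B)(L+1)^{1/4}(1-\gamma)^{-9/4}$. The span bound \eqref{eq:span-residual} matters only for the sharper $\rho^4$ version.

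Two further steps fail as written.

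\textbf{Cover lengths.} Assumption~\ref{ass:hitting} bounds hitting times only for stationary pairs. ``Freezing the pair at each round'' therefore does not control a block in which both policies keep changing. You need the bound for arbitrary history-dependent play. The paper gets it by treating action pairs as actions of an MDP that maximizes the discounted time to hit. An optimal stationary deterministic policy exists and is a product pair; then let the discount tend to one.

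\textbf{Reference movement.} A reference moving by relative amount $O(\alpha)$ per visit is too fast. The coupling contracts only at rate $v=\eta\tau\approx\alpha\theta$. The comparator-drift term $\sum_i d_i(\bar p_i-p_i)^2/(\bar p_ip_i)$, with $d_i$ the reference's relative step, can be as large as $\max_i|d_i|$ times the symmetrized divergence. That swamps the contraction unless $\max_i|d_i|\lesssim v$. The paper obtains $|d_i|\le v/16$ by showing that the reference's log-barrier dual residual $g(\bar p)$ stays below $\tau/64$:
\begin{itemize}
\item it contracts by $\sqrt{1-v}$ per exact step, because skew-symmetry cancels the cross term;
\item changes of $w$ and $\tau$ perturb it by only $C\beta\tau$ with $\beta\le2cv$.
\end{itemize}
This single invariant gives both the local gap bound $2(A+B)\tau$ and the slow movement. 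It also avoids the sensitivity analysis of the moving regularized equilibrium that your contraction toward that equilibrium would require.
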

Our algorithm ARTD improves the last-iterate convergence rate from
$\widetilde{\mathcal O}(t^{-1/(9+\nu)})$ of
\citet[Theorem~2]{cai2023uncoupled}, for any fixed $\nu>0$, to
$\widetilde{\mathcal O}(t^{-1/4})$ under the same bandit feedback model
and uniform hitting assumption. For fixed game parameters, this reduces
the iteration complexity for reaching an $\varepsilon$-Nash equilibrium
from $\widetilde{\mathcal O}(\varepsilon^{-(9+\nu)})$ to
$\widetilde{\mathcal O}(\varepsilon^{-4})$.
ARTD also improves the dependence on the hitting time in the duality gap
bound from $L^{2+1/\nu}$ to $(L+1)^{1/4}$.
We present a simplified parameter dependence in
Theorem~\ref{thm:last-iterate} and give the sharper bound in
Appendix~\ref{app:rate}.

Our algorithm achieves these improvements with uncoupled updates along
a single trajectory, using only each player's own actions, realized
payoffs, and observed states. It requires no knowledge of $L$, the time
horizon, or the confidence level. The guarantee holds simultaneously at
all rounds for the policies actually played, uniformly over starting
states. Beyond its effect on $L$, the state count $S$ contributes only
logarithmic factors.

Theorem~\ref{thm:last-iterate} concerns self-play, where both players run
ARTD. Our analysis also shows that a single player running ARTD
asymptotically attains the best response value against any opponent
whose policy converges.

\begin{corollary}[Rationality]
\label{cor:rationality}
Under Assumption~\ref{ass:hitting}, either player running ARTD with $c$ as in
Theorem~\ref{thm:last-iterate} is rational in the sense of
Section~\ref{sec:prelim}. In particular, if $y_t$ converges statewise almost
surely to a stationary policy $y_\infty$, then
\[
\max_{s\in\mathcal S}
\left\{\max_{x\in\Pi_x}V^{x,y_\infty}(s)
-V^{x_t,y_\infty}(s)\right\}\longrightarrow0
\qquad\text{almost surely}.
\]
The symmetric statement holds for the second player.
\end{corollary}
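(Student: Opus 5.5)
The statement to prove is Corollary~\ref{cor:rationality}. We treat the first player; the second is symmetric. The plan is to reuse the single-player ingredients of the proof of Theorem~\ref{thm:last-iterate} and replace the self-play duality gap by a best-response gap.

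Fix $y_\infty$ and define the one-sided operator
\[
(\mathcal T_{y_\infty}w)(s)=\max_{u}u^\top M_s(w)y_{\infty,s}.
\]
Its fixed point is $(1-\gamma)\max_x V^{x,y_\infty}$. The analogue of \eqref{eq:local-to-global} reads
\[
\max_x V^{x,y_\infty}(s)-V^{x_t,y_\infty}(s)\le\frac{2}{(1-\gamma)^2}\Bigl(\max_s\bigl\{\max_u u^\top M_s(w)y_{\infty,s}-x_s^\top M_s(w)y_{\infty,s}\bigr\}+\|\mathcal T_{y_\infty}w-w\|_\infty\Bigr).
\]
It follows from the same Bellman contraction argument applied to the single-agent MDP induced by $y_\infty$. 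So it suffices to show, almost surely, two things. First, the first player's local regret against $y_{\infty,s}$ vanishes. Second, $w$ becomes consistent with $\mathcal T_{y_\infty}$.

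\textbf{Step 1: state visitation and cover lengths.} The hitting-time bound in Assumption~\ref{ass:hitting} holds for every stationary pair, so it holds for each $(x_t,y_t)$. The cover-length concentration used in the theorem should therefore go through for an arbitrary adapted opponent. Combined with Borel--Cantelli over a summable sequence $\delta_k=k^{-2}$, this gives almost surely that the cover lengths are $\widetilde{\mathcal O}(L\log t)$. It also gives that the martingale noise terms in the $z$-averaging and in the log-barrier estimates obey their high-probability bounds eventually.

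\textbf{Step 2: opponent drift enters only as a tracking error.} The first player's update never uses $y_t$ except through the sampled losses. Since $y_t\to y_\infty$, the local games the player actually faces are perturbations of $M_s(w)y_{\infty,s}$ by $o(1)$. The single-player log-barrier contraction lemma controls the distance of $x_s$ to the regularized best response against the current loss vector. Fed with this drift, it gives that the regularized local regret is bounded by $\mathcal O(\varepsilon)$ plus an $o(1)$ drift term. Similarly, $z(s)$ tracks $x_s^\top M_s(w)y_s$, which is within the local regret of $(\mathcal T_{y_\infty}w)(s)$ up to $o(1)$.

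\textbf{Step 3: temperature and Bellman residual.} The clipped critic update plus the adaptive temperature rule \eqref{eq:temperature-update} give the same dichotomy as in the theorem. Either $\|z-w\|_\infty\gg\tau$, in which case $w$ makes progress of order $(1-\gamma)\beta\tau$ toward the $\mathcal T_{y_\infty}$ fixed point. Or $\|z-w\|_\infty\lesssim\tau$, in which case $\tau$ shrinks geometrically at rate $\beta$ per cover. We then argue $\tau\to0$ and $\|\mathcal T_{y_\infty}w-w\|_\infty\lesssim\|z-w\|_\infty+o(1)\to0$.

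\textbf{Main obstacle.} The hard part is showing $\tau\to0$ almost surely without a rate on the opponent's convergence. The decrease factor is slow when $\|z-w\|_\infty$ is large, and in principle the opponent's drift could keep it large. To handle this, I would argue by contradiction. If $\tau$ stays above some $\tau_0>0$, the parameters eventually become frozen up to constants. The opponent's drift then eventually falls below $\beta\tau_0$, so the residual potential $\|\mathcal T_{y_\infty}w-w\|_\infty$ decreases by a fixed amount per cover until $\|z-w\|_\infty\lesssim\tau$. From that point $\tau$ contracts geometrically, which contradicts $\tau\ge\tau_0$. Once $\tau\to0$, all error terms in Steps 2--3 vanish. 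The displayed inequality then yields the claimed almost-sure convergence.
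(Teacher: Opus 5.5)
\paragraph{Step 3 does not show that $w$ converges.}
There is a genuine gap in Step~3, and the obstacle you single out is not the real one. Showing $\tau\to0$ needs no argument about the opponent. Since $\|z-w\|_\infty\le1$ always, if $\tau$ stayed above $\tau_0>0$ then $\beta$ would stay bounded below, and every cover would shrink $\tau$ by a fixed fraction.

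The difficulty starts after $\tau\to0$. The theorem's dichotomy (Lemma~\ref{lem:residual}) keeps $\|\mathcal T w-w\|_\infty\le Q\tau$ only when the target error is at most $e_0\tau$ for a fixed $e_0$. Against an opponent that converges at no rate, $\|z-\mathcal T_{y_\infty}w\|_\infty$ contains an $o(1)$ opponent-deviation term that is not $\mathcal O(\tau)$. So the Bellman residual is no longer tied to $\tau$, and your claim $\|z-w\|_\infty\to0$ is essentially the conclusion itself.

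In your branch $\|z-w\|_\infty\gg\tau$ you gain $(1-\gamma)\beta\tau$ per cover. But $\beta\tau\asymp\tau^5$, and along the fast schedule $\tau_k\asymp k^{-1/4}$ these gains are summable, so $w$ could freeze away from the fixed point $w^\ast_\infty$.

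The missing idea is what the adaptive temperature rule buys. Put $b_k=\beta_k\tau_k/\max\{\tau_k,\|z_k-w_k\|_\infty\}$. Then $\tau_{k+1}/\tau_k=1-b_k/\bigl(2((1-\gamma)^{-1}+\sqrt{A+B})\bigr)$, so $\tau_k\to0$ forces $\sum_k b_k=\infty$. Suppose the target error $\|z_k-\mathcal T_{y_\infty}w_k\|_\infty$ and $\tau_k$ are both below $(1-\gamma)\epsilon/2$. Then while $\|w_k-w^\ast_\infty\|_\infty>\epsilon$ one has $\|z_k-w_k\|_\infty\ge(1-\gamma)\epsilon/2$, hence $\beta_k\tau_k\ge\bigl((1-\gamma)\epsilon/2\bigr)b_k$. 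The map $u\mapsto u+\beta\operatorname{clip}(v-u,-\tau,\tau)$ is monotone, and $\mathcal T_{y_\infty}$ is a $\gamma$-contraction. Together these reduce $\|w_k-w^\ast_\infty\|_\infty$ by $\beta_k\tau_k$ per cover until the $\epsilon$-box is reached, and the box is then invariant. This gives $w\to w^\ast_\infty$ with no rate on $y_t$.

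\paragraph{Step 2 relies on the wrong quantity.}
Without a rate on $y_t\to y_\infty$, the distance from $x_s$ to a regularized best response need not vanish, whether taken against the current or the limiting loss. Suppose two actions tie under $y_\infty$, but their current losses differ by $\epsilon_t\gg\tau_t$. Then the iterate is pushed toward a near-vertex and stays at $\ell_1$ distance $\Theta(1)$ from the regularized best response to the limiting loss. If instead the sign of that difference alternates, the regularized best response to the current loss jumps between near-vertices faster than the contraction rate $v\asymp\tau^4$ can follow. In both cases the regret against $y_\infty$ still tends to zero. So converting distance to local gap through $q\|p-\bar p\|_1$, as the theorem does, cannot deliver the $o(1)$ regret you claim.

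The paper avoids distances altogether. It treats the difference between the current loss $F_n$ and the limiting loss $f_n$ as a pathwise bias. It then works in reciprocal coordinates $u_i=\tau/p_i$, where the log-barrier step is affine in the sampled loss and the simplex multiplier is common to all coordinates. After showing that the exploration floor is eventually inactive, it proves $u_i-u_{i_0}-\omega_n(f_i-f_{i_0})\to0$ using a stochastic approximation lemma with predictable gains. This bounds the regret directly.

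The same care is needed for $z$. Compare it with the slowly moving quantity $x^\top M_s(w)y_\infty$, not with $x^\top M_s(w)y_t$, whose motion between visits is uncontrolled.
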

Corollary~\ref{cor:rationality} ensures that our algorithm learns to
exploit suboptimality in the limiting policy of an arbitrary opponent.

\paragraph{Proof sketch.}
The main difficulty is that policy learning and value estimation change
each other's targets. We compare the sampled policies with a reference
policy driven by exact local payoffs, whose local duality gap is
$\mathcal O((A+B)\tau)$. Bounded value movement controls the reference
drift, and zero-sum cancellation gives contraction under the log barrier.
Together with faster TD averaging, this bounds both the local duality gap
and the target estimation error by $\widetilde{\mathcal O}((A+B)\tau)$.

Clipped value updates then reduce the Bellman residual. When a
coordinate's residual is large, its value moves by $\beta\tau$ toward
the target, while the Bellman target moves by at most $\gamma\beta\tau$.
The adaptive temperature update preserves this net progress of
$(1-\gamma)\beta\tau$ as the required accuracy tightens, yielding
\[
\operatorname{Gap}(x_t,y_t)
=\widetilde{\mathcal O}\!\left(
\frac{(A+B)\tau_t}{(1-\gamma)^2}\right).
\]
Here $\tau_t$ is the temperature before round $t$.

Finally, the hitting assumption bounds each cover by
$\widetilde{\mathcal O}(L+1)$ rounds. At small temperatures,
$\beta=c^3(A+B)^2(\tau/q)^4$, so the number of covers needed to reach
temperature $\tau$ scales as $\tau^{-4}$.
Converting covers to rounds and inverting this count gives the
$t^{-1/4}$ rate in~\eqref{eq:main-rate}.
Appendix~\ref{app:last-iterate} gives the full proof.

For rationality, we track reciprocal action probabilities to establish
asymptotic greediness and convergence to the best response value;
see Appendix~\ref{app:rationality}.

%% file: related_work.tex
\section{Related Work}
\label{sec:related-work}

\paragraph{Last-iterate convergence.}
With full gradient feedback, optimistic and extragradient methods have
been extensively studied for last-iterate convergence in matrix games,
saddle point optimization, and monotone variational inequalities
\citep{daskalakis2019last,mertikopoulos2019optimistic,golowich2020last,wei2021linear,cai2022tight}.
Under bandit feedback, recent work develops uncoupled learning dynamics
for zero-sum games, monotone games, and bilinear saddle point problems
\citep{cai2023uncoupled,dong2026uncoupled,maiti2026efficient}.
A complementary approach converts average-iterate guarantees into
last-iterate guarantees through a black-box reduction
\citep{cai2026average}.
For uncoupled learning in matrix games with bandit feedback,
\citet{fiegel2026harder} establish a $t^{-1/4}$ lower bound for
$L^2$ last-iterate convergence, and
\citet{fiegel2026optimal} obtain a matching
$\widetilde{\mathcal O}(t^{-1/4})$ high-probability guarantee using
log-barrier regularization.
Our algorithm also uses the log barrier, but must jointly control policy
learning and changing continuation values from a single trajectory.
When opponent actions are additionally observed, faster last-iterate
rates are possible \citep{hait2026near,zhang2026minimax}.
Our two players learn independently, without observing each other's
actions or communicating with each other.

\paragraph{Markov games.}
Markov games provide a standard framework for sequential multi-agent
decision making \citep{shapley1953stochastic,littman1994markov}.
A substantial literature studies sample-efficient equilibrium learning
through online exploration and self-play
\citep{bai2020provable,bai2020near,liu2021sharp,jin2024vlearning},
as well as learning with a generative model
\citep{zhang2020model}.
Offline equilibrium learning instead uses a fixed dataset, with
guarantees governed by coverage and function approximation
\citep{cui2022offline,zhong2022pessimistic,zhang2023offline}.
For policy optimization with full information, entropy regularization
and optimistic updates yield last-iterate guarantees in zero-sum
Markov games \citep{cen2021fast,cen2023faster}.
Another line studies decentralized learning dynamics, including
asymptotic convergence and rationality, optimistic policy updates, and
finite-sample analyses of learning based on best responses
\citep{sayin2021decentralized,wei2021markov,chen2024decentralized}.
Most directly related to our setting, \citet{cai2023uncoupled} establish
last-iterate convergence for uncoupled learning along a single
trajectory with bandit feedback.
Under the same feedback model and uniform hitting assumption, our
algorithm ARTD improves their
$\widetilde{\mathcal O}(t^{-1/(9+\nu)})$ rate, for any fixed $\nu>0$,
to $\widetilde{\mathcal O}(t^{-1/4})$, while requiring no knowledge of
the hitting time bound, the time horizon, or the confidence level.

%% file: conclusion.tex
\section{Conclusion}
\label{sec:conclusion}

We study uncoupled last-iterate learning in unknown two-player zero-sum
discounted Markov games with bandit feedback.
Under the uniform hitting time assumption, our algorithm ARTD achieves a
$\widetilde{\mathcal O}(t^{-1/4})$ duality gap bound for the current
policies, with high probability simultaneously over all rounds and
starting states.
This improves the $\widetilde{\mathcal O}(t^{-1/(9+\nu)})$ rate of
\citet{cai2023uncoupled}, for any fixed $\nu>0$, under the same feedback
model and hitting time assumption.
ARTD learns along a single trajectory and requires no knowledge of the
hitting time bound, the time horizon, or the confidence level.
Our design separates fast temporal difference averaging from bounded
value updates, allowing statistical estimation to proceed while limiting
changes in the targets used for policy learning.
We control the resulting policy error by tracking a reference policy
and adapt regularization to the progress of value estimation.
By reconciling fast value estimation with stable policy learning, ARTD
achieves the improved last-iterate rate within the same uncoupled learning
model.

%% file: proofs.tex
\section{Proof of Theorem~\ref{thm:last-iterate}}
\label{app:last-iterate}

We first control learning within each state, then establish Bellman
consistency, and finally count interactions. Throughout this appendix,
$C$ denotes a positive numerical constant that may increase between
displays. Once the universal algorithm constant $c$ is fixed sufficiently
small, these constants are independent of $S,A,B,L,\gamma,t$, and
$\delta$. Subscripts $t$ refer to quantities just before round $t$;
a superscript $+$ denotes the result of the particular update under
discussion. At a cover boundary the target estimate $z$ includes the
sample that completed the cover.
We write $\Delta_t=\Delta(x_t,y_t;w_t)$ and $[a]_+=\max\{a,0\}$.
At a fixed state's visits, $x_n,y_n$ denote its action distributions
and $z_n$ its target estimate; $w_n$ remains the full value vector.
For the probability bounds, write
\[
\Lambda_t=\log\frac{8S(t+2)^2}{\delta}.
\]

\subsection{Pathwise bounds and Bellman certificates}
\label{app:pathwise}

It is useful to express the composite policy update as an ordinary mirror
step. Define its effective step size and contraction scale by
\[
\eta=\frac{\alpha}{q(1+\alpha\theta)},
\qquad v=\eta\tau=\frac{\alpha\theta}{1+\alpha\theta}.
\]
If $\widehat F=q\widehat g$ is the importance weighted centered loss on
the original scale, expanding the objective
in~\eqref{eq:actor-update} and dividing by $1+\alpha\theta$ gives exactly
\begin{equation}
\label{eq:explicit-mirror}
p^+=\arg\min_{u\in\Delta_d(h)}
\left\{D_\Psi(u,p)+\eta\langle
\widehat F+\tau\nabla\Psi(p),u\rangle\right\}.
\end{equation}
In particular, $\beta/v=c(1+\alpha\theta)\le2c$.

\begin{lemma}[Movement and scale bounds]
\label{lem:movement}
All value estimates belong to $[0,1]^{\mathcal S}$, and
\[
0<\tau\le512(1-\gamma),\qquad
1-\gamma\le q\le513,\qquad \tau\le q.
\]
At a cover boundary,
\begin{equation}
\label{eq:boundary-budgets}
\|w^+-w\|_\infty\le\beta\tau,\qquad
0\le\tau-\tau^+\le
\frac{\beta\tau}{2((1-\gamma)^{-1}+\sqrt{A+B})},\qquad
|q^+-q|\le3\beta\tau.
\end{equation}
Adjacent scales and gains differ by a factor $1+\mathcal O(\beta)$.
At a visit, either player's policy satisfies
\begin{equation}
\label{eq:policy-movement}
\|p^+-p\|_1\le C\eta q+C c\beta\theta.
\end{equation}
Here the second term also covers an increase of the floor since the
previous visit; the current and preceding $\beta$ are comparable.
When both players use the prescribed updates, the conditional means
$\mu=x^\top M_s(w)y$ at consecutive visits satisfy
\begin{equation}
\label{eq:mean-movement}
|\mu_{n+1}-\mu_n|\le C\eta_n q_n^2+C\beta_n\tau_n.
\end{equation}
The same bound holds over a partial interval after a visit.
\end{lemma}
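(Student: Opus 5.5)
We prove the lemma by induction over rounds, checking each claim directly against the update rules of Algorithm~\ref{alg:td-learning}.

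\textbf{Value estimates and scales.} The invariant $w,z\in[0,1]^{\mathcal S}$ comes first. Initially $w=z=1/2$. If $w\in[0,1]^{\mathcal S}$ and $r_t\in[0,1]$, then $Y_t=(1-\gamma)r_t+\gamma w(s_{t+1})$ is a convex combination of points in $[0,1]$, so $z^+(s_t)=(1-\kappa)z(s_t)+\kappa Y_t\in[0,1]$ because $\kappa\le c<1$. For $w^+$, each coordinate has the form $w+\beta\,\mathrm{clip}(z-w)$ with $\beta\le1$. The clipped correction has the sign of $z-w$ and magnitude at most $|z-w|$, so $w^+$ lies between $w$ and $z$ and stays in $[0,1]$. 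The bound $\|w^+-w\|_\infty\le\beta\tau$ is immediate from clipping. For the temperature, the subtracted factor is at most $\beta/(2(1-\gamma)^{-1})\le1/2$. Hence $\tau^+\in[\tau/2,\tau]$, which gives positivity, monotonicity, $\tau\le512(1-\gamma)$, and the stated bound on $\tau-\tau^+$. Since $\operatorname{span}(w)\le1$, we get $1-\gamma\le q\le 1+512=513$ and $\tau\le q$. For $|q^+-q|$, note that $\operatorname{span}$ is $2$-Lipschitz in $\|\cdot\|_\infty$. This gives $|q^+-q|\le 2\gamma\beta\tau+(\tau-\tau^+)\le3\beta\tau$. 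Because $q\ge\tau$, this also shows $q^+/q=1+\mathcal O(\beta)$. Likewise $\tau^+/\tau=1+\mathcal O(\beta)$, so $\theta$, $\varepsilon$, $h$, $\kappa$, $\alpha$, $\beta$ all change by factors $1+\mathcal O(\beta)$, using $\beta\le c$.

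\textbf{Policy movement.} This uses the mirror form \eqref{eq:explicit-mirror}. Start from the closed form $p_i^+=\max\{h,(v_i+\lambda)^{-1}\}$. Under the log barrier, the local norm at $p$ is $\|u\|_p^2=\sum_i u_i^2/p_i^2$. The effective gradient $G=\widehat F+\tau\nabla\Psi(p)$ has $\widehat F$ supported on $i_t$ with $|\widehat F_{i_t}|\le q/p_{i_t}$, and $\tau\nabla\Psi(p)_i=-\tau/p_i$. Both terms therefore have local dual norm of order $q$, since $\|p\odot G\|_\infty\le q+\tau\le2q$ coordinatewise. A standard log-barrier stability argument then applies: with $\eta q\le\alpha\le c$ small, $p_i^+/p_i\in[1/2,2]$ and $|p_i^+-p_i|\le C\eta q\,p_i$ for the unclipped part. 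I would establish it by bounding $|\lambda|$ via the normalization constraint and then comparing $(v_i+\lambda)^{-1}$ with $p_i$. Summing over $i$ gives $C\eta q$. A complication arises because the floor $h$ may have increased since the previous visit, as $\varepsilon$ changes between covers. Projection onto the larger floor then moves mass of order $d\cdot|h_{\rm new}-h_{\rm old}|$. Since $h\propto\varepsilon\propto\theta$, this is at most $d\,h\,\mathcal O(\beta)\le Cc\beta\theta$, after accumulating the $1+\mathcal O(\beta)$ ratios across the intervening covers. Here one must use that $\theta$ is monotone or that its changes telescope. I expect this floor/time-change bookkeeping to be the main obstacle. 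The fix I would try first is to bound only the net change of $\varepsilon$, comparing the current and preceding $\beta$ as the statement indicates.

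\textbf{Mean movement.} The bound \eqref{eq:mean-movement} follows from a three-term decomposition:
\[
\mu_{n+1}-\mu_n=(x_{n+1}-x_n)^\top M_s(w_{n+1})y_{n+1}+x_n^\top M_s(w_{n+1})(y_{n+1}-y_n)+x_n^\top\bigl(M_s(w_{n+1})-M_s(w_n)\bigr)y_n .
\]
Payoff differences matter only through their centered range $\le q$. This holds because $x-x'$ and $y-y'$ sum to zero, so subtracting $w(s)$ costs nothing. The first two terms are therefore bounded by $q\bigl(\|x_{n+1}-x_n\|_1+\|y_{n+1}-y_n\|_1\bigr)/2\le C\eta q^2+Cc\beta\theta q$. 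Using $\theta q=\tau$, this becomes $C\eta q^2+C\beta\tau$. The last term is at most $\gamma\|w_{n+1}-w_n\|_\infty$. When consecutive visits straddle cover boundaries, this term is summed over all intervening boundaries. I would handle that sum by the same comparability argument, or by noting that only the boundaries between the two visits contribute. The partial-interval version is identical, with one endpoint replaced by the current state.
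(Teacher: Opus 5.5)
Your value, temperature, and scale bounds are correct and match the paper, and your three-term decomposition of $\mu_{n+1}-\mu_n$ is the right one. The gap is in the bookkeeping you flagged yourself, and it has two parts.

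First, you never use the fact that consecutive visits to a state are separated by \emph{at most one} cover boundary. Once $s$ has been visited, any cover that starts afterward cannot close until $s$ supplies another transition. This fact is what makes both the floor increase and the term $\gamma\|w_{n+1}-w_n\|_\infty\le\gamma\beta\tau$ single-boundary quantities. Without it, $K$ intervening boundaries would cost $K\beta\tau$. Your suggested substitutes do not help: $\theta=\tau/q$ is not monotone, since it can increase when $\operatorname{span}(w)$ decreases, and positive parts of its increments do not telescope.

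Second, even across one boundary your floor estimate loses a factor up to $A+B$. Since $h=c\min\{\theta,1/(A+B)\}$, a relative change $1+\mathcal O(\beta)$ in $h$ yields only $d(h^+-h)_+\le C\beta\,dh\le Cc\beta\min\{1,(A+B)\theta\}$, not $Cc\beta\theta$. The missing observation is that $\theta$ can increase only through the span. Writing $R=q-\tau$, one has $\theta^+-\theta=(R\Delta\tau-\tau\Delta R)/(qq^+)$ with $\Delta\tau\le0$ and $|\Delta R|\le2\beta\tau$, so $(\theta^+-\theta)_+\le C\beta\theta^2$. Hence $d(h^+-h)_+\le Cc(A+B)\beta\theta^2\le Cc\beta\theta$ when $(A+B)\theta\le1$. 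Above that threshold $h$ is constant, and a crossing step is handled by truncation.

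Your per-step stability claim also needs repair. The coordinatewise bound $|p_i^+-p_i|\le C\eta q\,p_i$ cannot hold with $C$ independent of $d$. Suppose the centered sample $Z=\pm(Y-w(s))$ vanishes. Then the multiplier is $\lambda\approx\eta\tau/\sum_jp_j^2$. A coordinate of size $d^{-1/2}$ among many of size about $d^{-1}$ moves by a relative amount about $\eta\tau\sqrt d/2$, and early in learning $\tau$ is comparable to $q$. Only the weighted total $|\lambda|\sum_ip_i^2$ is $\mathcal O(\eta q)$. Your sketch also does not address coordinates clipped down to $h$.

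The paper avoids bounding $\lambda$ altogether. It combines a sign split with mass conservation $\sum_i(p_i^+-p_i)=0$. Let $h_-$ be the floor satisfied by the old policy. If $\lambda\ge0$, each free coordinate increases by at most $2\eta(\tau+|Z|)p_i$, and each floor-constrained one by at most $(h-h_-)_+$. If $\lambda<0$, only the sampled coordinate can decrease, and by at most $\eta|Z|$. In either case $\|p^+-p\|_1\le4\eta(\tau+|Z|)+2d(h-h_-)_+$.
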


\begin{proof}
The target $Y=(1-\gamma)r+\gamma w(s')$ belongs to $[0,1]$.
Both the averaging update and the clipped value update are convex
combinations of numbers in this interval. The temperature is positive
and decreasing. These observations prove the first assertions and
the first two bounds in~\eqref{eq:boundary-budgets}. The span changes by
at most $2\|w^+-w\|_\infty$, proving the third bound. Since $q\ge\tau$,
all relative changes are $\mathcal O(\beta)$; the minimum in
$\varepsilon=\min\{1,(A+B)\theta\}$ preserves this comparison.

We verify the policy movement bound even when the old policy satisfies
only the preceding floor. Put $Z=\pm(Y-w(s))$, so $|Z|\le q$, and let
$i_*$ be the sampled action. The optimality conditions
of~\eqref{eq:explicit-mirror} give
\[
\frac1{p_i^+}=\frac{1-v+\eta Z\mathbf1\{i=i_*\}}{p_i}
+\lambda-\nu_i,
\qquad \nu_i\ge0,\quad \nu_i(p_i^+-h)=0.
\]
Let $h_-$ be the floor satisfied by $p$. If $\lambda\ge0$, every
increasing free coordinate has denominator at least
$1-\eta(\tau+|Z|)\ge1/2$, and its increase is at most
$2\eta(\tau+|Z|)p_i$. Increasing constrained coordinates contribute at
most $d(h-h_-)_+$ in total. If $\lambda<0$, every unplayed free
coordinate increases. A constrained unplayed coordinate also cannot
decrease: its optimality inequality is
$1-v+\lambda p_i\ge p_i/h$, which implies $h>p_i$.
Thus only the sampled coordinate can decrease, by at most $\eta|Z|$.
Twice the total increase or decrease yields
\[
\|p^+-p\|_1\le4\eta(\tau+|Z|)+2d(h-h_-)_+.
\]
To bound the floor change, write $R=q-\tau$. At a boundary,
$\Delta\tau\le0$ and $|\Delta R|\le2\beta\tau$. Consequently
\[
(\theta^+-\theta)_+
=\left[\frac{R\Delta\tau-\tau\Delta R}{q q^+}\right]_+
\le C\beta\theta^2.
\]
Since $h=c\min\{\theta,1/(A+B)\}$ and $d\le A+B$, this implies
$d(h^+-h)_+\le Cc\beta\theta$. For a step crossing the point
$\theta=1/(A+B)$, the same conclusion follows by truncating the
increment at that point. This proves~\eqref{eq:policy-movement}.

There is at most one cover boundary between consecutive visits to a
state. After its visit it must be visited again before a second cover
can end. Changes in either policy alter $x^\top M_s(w)y$ by at most
the matrix's entry range times the policy's $\ell_1$ change.
That range is at most $q$. A change in $w$ contributes at most
$\gamma\|\Delta w\|_\infty$. Applying the preceding bounds gives
\eqref{eq:mean-movement}, including the boundary immediately after
a closing sample.
\end{proof}

\begin{lemma}[From local accuracy to the duality gap]
\label{lem:certificate}
For any stationary pair $(x,y)$ and $w\in[0,1]^{\mathcal S}$,
\[
\operatorname{Gap}(x,y)\le
\frac{2}{(1-\gamma)^2}
\left(\Delta(x,y;w)+\|\mathcal T w-w\|_\infty\right).
\]
Moreover, Assumption~\ref{ass:hitting} implies
\begin{equation}
\label{eq:span-residual}
\operatorname{span}(w)\le
L\bigl(1-\gamma+\operatorname{span}(\mathcal T w-w)\bigr).
\end{equation}
\end{lemma}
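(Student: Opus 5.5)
The plan is to prove the two assertions separately, both by standard performance-difference and contraction arguments on normalized values. Write $W^{x,y}=(1-\gamma)V^{x,y}$, and let $\mathcal T^{x,y}$ denote the policy evaluation operator $(\mathcal T^{x,y}w)(s)=x_s^\top M_s(w)y_s$. Define the one-sided operators $(\mathcal T^{\cdot,y}w)(s)=\max_u u^\top M_s(w)y_s$ and $(\mathcal T^{x,\cdot}w)(s)=\min_v x_s^\top M_s(w)v$. Each of these operators is a $\gamma$-contraction in $\|\cdot\|_\infty$. Their fixed points are $W^{x,y}$, $\max_{x'}W^{x',y}$ and $\min_{y'}W^{x,y'}$ respectively; the latter two are single-agent MDP optimal values.

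\textbf{First inequality.} Let $\bar W=\max_{x'}W^{x',y}$. Then
\[
\|\bar W-w\|_\infty\le\frac{1}{1-\gamma}\,\|\mathcal T^{\cdot,y}w-w\|_\infty .
\]
Pointwise, the local gap at $s$ is at least $\max_u u^\top M_s(w)y_s-(\mathcal T w)(s)$, because $(\mathcal T w)(s)\ge\min_v x_s^\top M_s(w)v$. It is also nonnegative. Hence
\[
\|\mathcal T^{\cdot,y}w-w\|_\infty\le\Delta+\|\mathcal T w-w\|_\infty ,
\]
which uses $\mathcal T^{\cdot,y}w\ge\mathcal T w$. Symmetrically, $\underline W=\min_{y'}W^{x,y'}$ satisfies the same bound. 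Therefore
\[
\bar W-\underline W\le\frac{2}{1-\gamma}\bigl(\Delta+\|\mathcal Tw-w\|_\infty\bigr).
\]
Dividing by $1-\gamma$ to return to unnormalized values gives the claimed factor $2/(1-\gamma)^2$.

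\textbf{Span bound.} Let $e=\mathcal T w-w$. Let $(x,y)$ be a stationary pair attaining the local minimax at every state, namely an equilibrium of each local game $M_s(w)$. Then
\[
w(s)=x_s^\top M_s(w)y_s-e(s)=(1-\gamma)\bar r(s)-e(s)+\gamma\,\mathbb E[w(s')].
\]
This is a Poisson-type identity along the chain induced by $(x,y)$. Fix states $s$ and $s'$, and iterate it up to the hitting time $T=T_{s'}$, which has expectation at most $L$ by Assumption~\ref{ass:hitting}. The goal is to show
\[
w(s)-w(s')\le L\bigl(1-\gamma+\operatorname{span}(e)\bigr)
\]
after recentering. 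The discounting is the main obstacle, because the optional stopping identity yields
\[
w(s)=\mathbb E\Bigl[\sum_{k<T}\gamma^k\bigl((1-\gamma)\bar r-e\bigr)(s_k)+\gamma^T w(s')\Bigr].
\]

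\textbf{Handling the discounting.} The first attempt will rewrite the recursion as
\[
w(s)-\gamma\,\mathbb E w(s_1)=(1-\gamma)\bar r-e,
\]
and then subtract the constant $c_0=\min_s(-e(s))$ or absorb it. Concretely, set $g=(1-\gamma)\bar r-e-(1-\gamma)w(s')$ and $\tilde w=w-w(s')$. Then
\[
\tilde w(s)=g(s)+\gamma\,\mathbb E\tilde w(s_1).
\]
Now $g$ ranges over an interval of width at most $1-\gamma+\operatorname{span}(e)$ plus $(1-\gamma)\operatorname{span}(w)$ effects. To avoid this circularity, apply the argument to the upper pair and the lower pair separately. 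Take $s'$ to be the argmin of $w$ and bound $\max w-\min w$ by stopping at $T$, using $\gamma^T w(s')\le w(s')$. This gives
\[
w(s)-w(s')\le\mathbb E\Bigl[\sum_{k<T}\gamma^k\bigl((1-\gamma)\bar r-e-(1-\gamma)w(s')\bigr)\Bigr].
\]
Each summand is at most $1-\gamma-\min e-(1-\gamma)\min w$. For the sign bookkeeping to give exactly the span of $e$, I may need to run the same argument with a pair minimizing locally when bounding from the other side, or to use that $\max_s e(s)\ge$ the relevant term. I expect this sign and constant bookkeeping around $\gamma^T$ and the recentering to be the only delicate step. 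Since $\mathbb E T\le L$, the bound $L\bigl(1-\gamma+\operatorname{span}(e)\bigr)$ follows once each summand is bounded by the bracket.
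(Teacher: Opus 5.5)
Your proof of the duality-gap inequality is correct and is essentially the paper's argument. The paper uses only the one-sided bounds $\mathcal T_y w\le w+(\Delta+\|\mathcal Tw-w\|_\infty)\mathbf 1$ and $\mathcal T_x w\ge w-(\Delta+\|\mathcal Tw-w\|_\infty)\mathbf 1$ together with monotonicity. Your two-sided bound via $\mathcal T^{\cdot,y}w\ge\mathcal Tw$ works equally well.

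For the span bound you follow the paper's skeleton. You fix a local saddle pair, write $w=f+\gamma P_0w$ with $f=(1-\gamma)\bar r-e$, and stop at the hitting time $T$ of a minimizer $s^*$ of $w$. However, the proposal stops at exactly the step that determines the constant. You bound each summand by $1-\gamma-\min e-(1-\gamma)\min w$ and then only hope that this is at most $1-\gamma+\operatorname{span}(e)$. That is equivalent to $\max_s e(s)\ge-(1-\gamma)\min_s w(s)$, which you never state precisely or prove. As written, the span bound is therefore not established.

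The missing fact takes one line. Since $r\ge0$, every entry of $M_{s^*}(w)$ is at least $\gamma\min w$, so $e(s^*)=(\mathcal Tw)(s^*)-\min w\ge-(1-\gamma)\min w$. Equivalently, evaluating $w=f+\gamma P_0w$ at $s^*$ gives $\min f\le f(s^*)\le(1-\gamma)\min w$. Hence every summand $f(s_k)-(1-\gamma)\min w$ is at most $\operatorname{span}(f)\le1-\gamma+\operatorname{span}(e)$. Then $\mathbb E\sum_{k<T}\gamma^k\le\mathbb E T\le L$ finishes the proof; the stopping identity holds for $T\wedge m$ and passes to the limit by bounded convergence.

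Two remarks on your bookkeeping:
\begin{itemize}
\item The shift $-(1-\gamma)w(s')$ in your display is essential. It comes from the exact identity $1-\gamma^T=(1-\gamma)\sum_{k<T}\gamma^k$, not from $\gamma^Tw(s')\le w(s')$. Simply dropping $\gamma^Tw(s')$ and bounding summands by $1-\gamma-\min e$ would need $\max e\ge0$. That fails, for instance, when $r\equiv0$ and $w\equiv1$, where $e\equiv-(1-\gamma)$.
\item Your worry about ``$(1-\gamma)\operatorname{span}(w)$ effects'' and circularity is unfounded. Your $g$ differs from $f$ by a constant, so only its level is at issue, not its width.
\end{itemize}

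The paper avoids this bookkeeping by recentering with $\min f/(1-\gamma)$ instead of $w(s')$. The shifted vector is then the discounted value of $f-\min f\ge0$, hence nonnegative at the stopping state. The summands then automatically lie in $[0,\operatorname{span}(f)]$.
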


\begin{proof}
Let $\mathcal T_y$ and $\mathcal T_x$ be the normalized best response
Bellman operators with $y$ and $x$ fixed, respectively. Their values at
$w$ enclose $\mathcal T w$, and
\[
\mathcal T_y w\le w+
(\|\mathcal T w-w\|_\infty+\Delta)\mathbf1,
\qquad
\mathcal T_x w\ge w-
(\|\mathcal T w-w\|_\infty+\Delta)\mathbf1.
\]
Monotonicity, translation by $\gamma$ times a constant, and contraction
bound their fixed points above and below by $w$ plus or minus
$(\|\mathcal T w-w\|_\infty+\Delta)/(1-\gamma)$.
These fixed points are normalized values; division by $1-\gamma$
proves the first claim.

For the span claim, choose a saddle pair for every $M_s(w)$ and fix
the resulting stationary policies. With their transition matrix $P_0$
and normalized reward vector $g\in[0,1-\gamma]^{\mathcal S}$,
\[
w=f+\gamma P_0w,
\qquad f=g-(\mathcal T w-w).
\]
Subtracting $\min f/(1-\gamma)$ from $w$ gives a vector $\widetilde w$
that is the discounted value of $f-\min f\ge0$.
Start at a state maximizing $\widetilde w$ and stop upon hitting a state
minimizing it. The discounted stopping identity and
$\widetilde w\ge0$ give
\[
\operatorname{span}(w)
\le \mathbb E\left[\sum_{j<T_{s'}}\gamma^j\right]
\operatorname{span}(f)
\le L\operatorname{span}(f).
\]
The stopping identity follows first for $T_{s'}\wedge m$ and then by
bounded convergence. Finally
$\operatorname{span}(f)\le1-\gamma+
\operatorname{span}(\mathcal T w-w)$.
\end{proof}

\subsection{A reference policy for the changing local games}
\label{app:reference}

We need a comparator that moves slowly enough for a sampled policy to
track it. Following exact regularized equilibria would require controlling
their sensitivity to changes in $w$. We instead let the reference take
one exact gradient step whenever its state is visited, so its movement
is controlled directly by the step size.

For one state, write $N=M_s(w)-w(s)\mathbf1\mathbf1^\top$ and
$p=(x,y)$. The centered game operator is
\[
F(p)=(-Ny,N^\top x)=Jp,\qquad J^\top=-J.
\]
Its coordinates have absolute value at most $q$, and
$\langle F(p)-F(u),p-u\rangle=0$.
For a matrix $M$, write
$\operatorname{gap}_M(x,y)=\max_u u^\top My-\min_v x^\top Mv$,
where the extrema range over the two action simplices.
Adding a constant to a block of $F$ changes neither a policy update
nor any residual below. On each simplex block define
\begin{equation}
\label{eq:reference-residual}
b_p=\frac{\sum_i p_i^2F_i(p)-\tau}{\sum_i p_i^2},
\qquad g_i(p)=p_i(F_i(p)-b_p)-\tau.
\end{equation}
Thus $\sum_i p_i g_i=0$. The norm $\|g(p)\|_2$ is the log-barrier
dual residual of \citet[Definition~6.3]{fiegel2026optimal}, written
explicitly for the two simplices. We use $\Psi$ for the sum of the two barriers.
The reference $\bar p$ starts uniformly. At a visit it takes the
unconstrained mirror step
\[
\bar p^+=\arg\min_{u\in\Delta(\mathcal A)\times\Delta(\mathcal B)}
\{D_\Psi(u,\bar p)+
\eta\langle F(\bar p)+\tau\nabla\Psi(\bar p),u\rangle\}.
\]
The minimizer is interior. This reference is used only in the proof.

\begin{lemma}[Reference accuracy and relative movement]
\label{lem:reference}
For sufficiently small universal $c$, the reference satisfies, at all
times and also immediately after its updates,
\[
\|g(\bar p)\|_2\le\frac{\tau}{64},\qquad
\operatorname{gap}_{M_s(w)}(\bar x,\bar y)\le2(A+B)\tau.
\]
At a visit,
\[
\max_i\left|\frac{\bar p_i^+}{\bar p_i}-1\right|\le\frac{v}{16}.
\]
Both $\bar p$ and $\bar p^+$ are feasible for the current exploration
floor, with a fixed multiplicative margin.
\end{lemma}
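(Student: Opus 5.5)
We prove the three claims together by induction over the visits to the fixed state, interleaved with the cover boundaries at which $(w,\tau)$, and hence $F$, $q$, $\eta$, $v$ and $h$, change. The induction hypothesis is the residual bound $\|g(\bar p)\|_2\le\tau/64$. At initialization $\bar p$ is uniform and $w\equiv1/2$. The centered payoff $N$ then has entries $(1-\gamma)(r-1/2)$, so $|F_i|\le 1-\gamma$ and $p_i=1/d$. This gives $|g_i(\bar p)|\le (2(1-\gamma)+\tau)/d+\tau$. Since $\tau=512(1-\gamma)$ dominates $1-\gamma$, a direct computation shows the residual is of order at most $\tau$. However, $\tau/64$ at the start is not automatic: $b_p$ absorbs the constant $-\tau$, because $\sum p_i^2 F_i - \tau$ over $\sum p_i^2 = 1/d$ gives $b=\bar F - d\tau$. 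Then $g_i=(F_i-\bar F)/d+\tau-\tau=(F_i-\bar F)/d$, which is at most $2(1-\gamma)/d\le \tau/256$. So the base case holds thanks to the constant $512$.

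\textbf{Visit step.} Write the optimality condition of the unconstrained mirror step as $1/\bar p_i^+=(1-v)/\bar p_i+\eta F_i(\bar p)+\lambda$ on each block. Multiplying by $\bar p_i$ gives $\bar p_i/\bar p_i^+ = 1+\eta(\bar p_i(F_i-b)-\tau)+(\text{normalization})$, i.e.\ $\bar p_i/\bar p_i^+-1=\eta g_i(\bar p)+\eta\bar p_i(b-\lambda')$. The choice of $b_p$ is exactly the least-squares projection that makes the normalization term of second order. Hence the ratio change is $\eta\|g\|_\infty+O(\eta^2 q^2)\le \eta\tau/64+O(\eta^2q^2)\le v/16$, using $\eta q\le\alpha\ll\theta$ so that $\eta q^2\ll v$. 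This gives the relative movement bound.

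Next we show the residual contracts: $\|g(\bar p^+)\|_2^2\le(1-cv)\|g(\bar p)\|_2^2+O(\eta^2 q^4)$-type error. Following \citet[Section~6]{fiegel2026optimal}, we expand $g(\bar p^+)$ to first order. The $-\tau$ regularization yields a $(1-v)$ factor. The game part contributes $\eta\,\mathrm{diag}(\bar p)J\,\mathrm{diag}(\bar p)g$-type terms, whose cross term with $g$ vanishes by skew-symmetry $J^\top=-J$ (the zero-sum cancellation). What remains is a second-order term $O(\eta^2 q^2\|g\|+\eta^2 q^2\tau)$, which is absorbed since $\eta q^2/\tau=\eta q/\theta\cdot(q/q)\lesssim c$. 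The fixed point of this recursion is $O(\sqrt{c})\tau\le\tau/128$, so the hypothesis is preserved after visits.

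\textbf{Boundary step.} This is the main obstacle: $\tau$ shrinks and $F$ moves without any contraction step. We bound the perturbation of $g$. Changing $w$ by $\beta\tau$ moves $F$ by $O(\beta\tau)$ and $g$ by $O(\beta\tau)$. Changing $\tau$ by $\beta\tau/(2(1/(1-\gamma)+\sqrt{A+B}))$ moves $g$ by at most $\sqrt{A+B}$ times that, since the $-\tau$ appears in every coordinate and in $b$ through $1/\sum p_i^2\le d$; the factor $\sqrt{A+B}$ in the temperature update exactly covers this. The total increment is thus $O(\beta\tau)=O(c\,v\tau)$. Since at most one boundary occurs between consecutive visits, while each visit contracts by $cv$-order amounts with $\beta/v\le 2c$, the contraction margin dominates, and the invariant $\|g\|_2\le\tau/64$ survives. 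The delicate bookkeeping is to make the margin $\tau/64-\tau/128$ absorb $O(\beta\tau)$ relative to the next contraction; we would handle this by proving the sharper invariant $\|g\|\le\tau/128+ C\beta\tau$ just after boundaries.

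\textbf{Gap and floor.} From $|g_i|\le\tau/64$ we get $\bar p_i(F_i-b)\in[\tau-\tau/64,\tau+\tau/64]$. Hence every $\bar p_i\ge \tau/(2(|F_i-b|))\gtrsim\tau/q=\theta$. We need $b$ to be within $O(q)$ of the $F_i$, which follows from $\sum p_i g_i=0$. This gives a floor $\theta/C\ge 2h$ since $h=c\min\{\theta,1/(A+B)\}$; the multiplicative margin is preserved for $\bar p^+$ by the ratio bound. For the gap: $\langle F(\bar p),\bar p-u\rangle=\sum_i(\bar p_i-u_i)(F_i-b)$. Since $F_i-b\approx\tau/\bar p_i$, this is $\sum_i(\bar p_i-u_i)\tau/\bar p_i$ up to the $1/64$ error, which is at most $d\tau(1+1/64)$. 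Summing over the two blocks, and using that the bilinear gap equals $\max_u\langle F(\bar p),\bar p-u\rangle$ by $J^\top=-J$, gives $\operatorname{gap}\le 2(A+B)\tau$.
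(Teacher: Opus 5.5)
Your plan follows the paper's proof: induction on $\|g(\bar p)\|_2\le\tau/64$, the uniform base case where $b_p$ absorbs the $-\tau$, contraction at a visit from the regularization term plus the skew-symmetric cancellation, an $\mathcal O(\beta\tau)$ perturbation at the single boundary between visits, and the floor and gap read off from $\tau+g_i=\bar p_i(F_i-b_p)$. Your gap argument is the paper's. Two steps, however, do not hold as written.

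First, the floor. The identity $\sum_i\bar p_ig_i=0$ is just the definition of $b_p$. It gives $F_i-b_p=F_i-Q^{-1}\sum_j\bar p_j^2F_j+\tau/Q$ with $Q=\sum_j\bar p_j^2$, and $\tau/Q$ can be as large as $d\tau$. So $b_p$ is not within $\mathcal O(q)$ of the $F_i$ when $(A+B)\tau\gtrsim q$, and $\bar p_i\gtrsim\theta$ fails there: at initialization $\theta=512/513$ while $\bar p_i=1/d$. Instead, sum $\tau+g_i=\bar p_i(F_i-b_p)$ over the block. This gives $\langle\bar p,F\rangle-b_p=d\tau+\sum_ig_i$, hence $F_i-b_p\le q+(d+\sqrt d/64)\tau$ and $\bar p_i\ge c_1\min\{\theta,1/(A+B)\}$. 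That still dominates $h=c\min\{\theta,1/(A+B)\}$, but only because of the minimum in $h$.

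Second, in the contraction step you assert a remainder $\mathcal O(\eta^2q^2\|g\|_2+\eta^2q^2\tau)$ but do not derive it, and the bounds you do state are too weak to give it.
An additive $\mathcal O(\eta^2q^4)$ in the squared norm, as in your first display, has a fixed point of order $\sqrt{\eta q^4/\tau}$. For $(A+B)\theta\le1$ one has $\eta q^4/\tau^3\asymp c^2(A+B)^2$, so this is of order $c(A+B)\tau$, not below $\tau/128$ for a universal $c$.
Likewise, push your normalization bound $\mathcal O(\eta^2q^2)$ through the game term of the exact expansion $g(\bar p^+)=P_{\bar p^+}[g+\tau r+r\odot g+\operatorname{diag}(\bar p^+)J\operatorname{diag}(\bar p)r]$. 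Here $r_i=(\bar p_i^+-\bar p_i)/\bar p_i$ and $P_{\bar p^+}$ projects blockwise orthogonally to $\bar p^+$. The result is an additive term of order $\eta^2q^3$, whose fixed point can reach $c^2(A+B)\tau$.

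What is needed is $r=-\eta g+e$ with $\|e\|_2\le C\eta^2\mathcal R^2$, where $\mathcal R=\|g(\bar p)\|_2$. The paper obtains this in three steps:
\begin{enumerate}
\item It gets $\|r\|_2\le4\eta\mathcal R$ from $\sum_i[r_i-\log(1+r_i)]\le-\eta\langle g,r\rangle$.
\item It writes $\bar p_i^+=\bar p_i/(1+\eta H_i)$ with $H_i=g_i+\ell\bar p_i$. Normalization together with $\sum_i\bar p_ig_i=0$ then gives $\ell Q=\eta\sum_i\bar p_iH_i^2/(1+\eta H_i)$.
\item The bound $\max_i\bar p_i\le\sqrt Q$ removes the dimension.
\end{enumerate}
Your remainder also omits the term $r\odot g\approx-\eta\,g\odot g$. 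It is first order in $\eta$ and of size $\eta\mathcal R^2$, and it is absorbed only because $\mathcal R\le\tau/64$ gives $\eta\mathcal R^2\le v\mathcal R/64$.

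With these ingredients the paper gets $\|g(\bar p^+)\|_2^2\le(1-v)\mathcal R^2$ inside the radius $\tau/64$. This is a local multiplicative contraction, not a global recursion with a small fixed point. Its rate must be of order $v$ with a constant independent of $c$, because the boundary perturbation is already $C\beta\tau\le2Ccv\tau$.
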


\begin{proof}
We first keep $M$ and $\tau$ fixed. Let
$r_i=(p_i^+-p_i)/p_i$ for the exact step, and let
$\mathcal R=\|g(p)\|_2$. Its optimality conditions and
$\sum_i(p_i^+-p_i)=0$ imply
\[
\sum_i[r_i-\log(1+r_i)]\le-\eta\langle g,r\rangle.
\]
The left side is at least
$\|r\|_2^2/[2(1+\|r\|_2)]$. Hence, if
$\eta\mathcal R\le1/8$,
\begin{equation}
\label{eq:relative-expansion}
\|r\|_2\le4\eta\mathcal R,
\qquad r=-\eta g+e,
\qquad \|e\|_2\le C\eta^2\mathcal R^2.
\end{equation}
To verify the remainder without a dimension factor, write the block
optimality equation as
$p_i^+=p_i/(1+\eta(g_i+\ell p_i))$. With
$Q_p=\sum_i p_i^2$ and $H_i=g_i+\ell p_i$, normalization gives
\[
\ell Q_p=\eta\sum_i\frac{p_iH_i^2}{1+\eta H_i}\ge0.
\]
The first bound on $r$ gives $\|H\|_2\le C\mathcal R$.
Since $\max_i p_i\le\sqrt{Q_p}$, it follows that
$\ell\sqrt{Q_p}\le C\eta\mathcal R^2$.
Expanding $-\eta H_i/(1+\eta H_i)$ now proves
the last bound in~\eqref{eq:relative-expansion}.

Let $P_p$ denote blockwise orthogonal projection onto the subspace
orthogonal to $p$ in each block. Then
$g=P_p(\operatorname{diag}(p)F(p)-\tau\mathbf1)$.
Subtracting the old block normalizer before projecting at $p^+$ gives
the exact identity
\[
g(p^+)=P_{p^+}\left[g+\tau r+r\odot g+
\operatorname{diag}(p^+)J\operatorname{diag}(p)r\right].
\]
The matrix $C_p=\operatorname{diag}(p)J\operatorname{diag}(p)$ is
skew-symmetric and has operator norm at most $q$: its off-diagonal
block has Frobenius norm at most
$q\|x\|_2\|y\|_2\le q$.
Substitute~\eqref{eq:relative-expansion}, use the nonexpansiveness of
$P_{p^+}$, and cancel $\langle g,C_pg\rangle=0$. This gives
\begin{equation}
\label{eq:residual-contraction}
\|g(p^+)\|_2^2
\le\bigl(1-2\eta\tau+C\eta^2(q+\tau+\mathcal R)^2\bigr)
\mathcal R^2+2\eta\mathcal R^3.
\end{equation}
For clarity, the vector before projection is
$g-\eta(\tau g+g\odot g+C_pg)+E$ with
$\|E\|_2\le C\eta^2\mathcal R^2(q+\tau+\mathcal R)$;
squaring this expression proves the displayed inequality.
The gains satisfy
\[
\eta q\le c^2,\qquad \frac{\eta q^2}{\tau}\le c^2.
\]
Thus inside $\mathcal R\le\tau/64$, choosing $c$ small gives
from~\eqref{eq:residual-contraction} that
$\|g(p^+)\|_2^2\le(1-v)\mathcal R^2$.

A change of the matrix by at most $u$ in every entry changes $g$ by
at most $\sqrt2u$, because each block's vector
$\operatorname{diag}(p)\Delta F$ has norm at most $u$.
A temperature change changes it by at most
$\sqrt{A+B}|\Delta\tau|$. Constant matrix shifts have no effect.
Between two visits,~\eqref{eq:boundary-budgets} therefore changes the
residual norm by at most $C\beta\tau$. Since $\beta\le2cv$,
the bound at the next visit is at most
$\sqrt{1-v}\,\tau/64+C\beta\tau$. For small enough $c$, this is
at most $\tau(1-\beta/[2((1-\gamma)^{-1}+\sqrt{A+B})])/64$,
which is no larger than the next radius $\tau^+/64$. Initially, in a uniform block,
$g_i=(F_i-\overline F)/d$, so
\[
\|g\|_2^2\le
(1-\gamma)^2\left(\frac1{4A}+\frac1{4B}\right)
\le\frac{(1-\gamma)^2}{2}.
\]
Since $\tau_0=512(1-\gamma)$, induction proves the residual bound.
The first inequality of~\eqref{eq:relative-expansion} proves the
relative movement bound.

For feasibility, consider a block of size $d$. Its loss coordinates
have range at most $q$. From~\eqref{eq:reference-residual},
\[
\tau+g_i=p_i(F_i-b_p),\qquad
\sum_i p_i(F_i-b_p)=d\tau+\sum_i g_i.
\]
The second identity and $|\sum_i g_i|\le\sqrt d\,\tau/64$ show
$F_i-b_p\le q+(d+\sqrt d/64)\tau$. Hence
\[
p_i\ge\frac{(1-1/64)\tau}
{q+(d+\sqrt d/64)\tau}
\ge c_1\min\{\tau/q,1/(A+B)\}
\]
for a universal $c_1>0$. This exceeds $4h$ when $c$ is small;
the relative movement preserves feasibility after the step.
Finally $F_i-b_p>0$, and
\[
\langle p,F\rangle-\min_iF_i
\le d\tau+\sum_i g_i.
\]
Adding the two players' regrets proves the local gap bound, as in
\citet[Lemma~6.4]{fiegel2026optimal}.
\end{proof}

\subsection{A common concentration argument for policies and targets}
\label{app:tracking}

We first record the probability argument. Its gains can depend on the
entire past; this is needed because the observed trajectory sets the
cover boundaries and the temperature.

\begin{lemma}[Stochastic contraction]
\label{lem:stochastic}
Let $X_n\ge0$ be adapted to a filtration $(\mathcal G_n)$, with
$\mathcal G_n$-measurable $u_n\in[0,1]$. Suppose
\[
X_{n+1}\le(1-c_1u_n)X_n+C_1u_n+W_n,
\qquad \mathbb E[W_n\mid\mathcal G_n]=0,
\]
and, for $0\le\lambda\le\lambda_0$,
\[
\log\mathbb E[e^{\lambda W_n}\mid\mathcal G_n]
\le C_2\lambda^2u_n(X_n+1).
\]
Here $c_1,C_1,C_2,\lambda_0>0$ are numerical constants and $X_0$ is
bounded by a numerical constant. For any collection of $m$ such
processes and any $\delta\in(0,1)$, with probability at least
$1-\delta$, all of them satisfy
\[
X_n\le C\log\frac{Cm(n+2)^2}{\delta}
\qquad(n\ge0).
\]
\end{lemma}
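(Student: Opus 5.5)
We prove the bound for a single process with an exponential supermartingale, then extend it to all times and all $m$ processes with a union bound. Fix one process and a parameter $\lambda\in(0,\lambda_1]$, where $\lambda_1\le\lambda_0$ is a small numerical constant chosen below. Consider $E_n=\exp(\lambda X_n)$. From the recursion, and since $\lambda\ge0$,
\[
\mathbb E[E_{n+1}\mid\mathcal G_n]\le \exp\bigl(\lambda(1-c_1u_n)X_n+\lambda C_1u_n\bigr)\,\mathbb E[e^{\lambda W_n}\mid\mathcal G_n].
\]
The moment generating function condition bounds the last factor by $\exp(C_2\lambda^2u_n(X_n+1))$, which gives
\[
\mathbb E[E_{n+1}\mid\mathcal G_n]\le \exp\bigl(\lambda X_n-\lambda u_n(c_1-C_2\lambda)X_n+\lambda u_n(C_1+C_2\lambda)\bigr).
\]
Choose $\lambda=\lambda_1=\min\{\lambda_0,c_1/(2C_2)\}$, so that $c_1-C_2\lambda\ge c_1/2$. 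Then
\[
\mathbb E[E_{n+1}\mid\mathcal G_n]\le E_n\exp\bigl(-\lambda u_n(c_1X_n/2-C_1-C_2\lambda)\bigr).
\]

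We split into two cases according to the size of $X_n$. Let $K=2(C_1+C_2\lambda_1)/c_1$.

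When $X_n\ge K$, the exponent is nonpositive, so $\mathbb E[E_{n+1}\mid\mathcal G_n]\le E_n$.

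When $X_n<K$, we have $E_n\le e^{\lambda K}$, and since $u_n\le1$ the factor is at most $e^{\lambda(C_1+C_2\lambda)}$. So $\mathbb E[E_{n+1}\mid\mathcal G_n]\le e^{\lambda K+\lambda C_1+\lambda^2C_2}=:D$, a numerical constant.

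In both cases
\[
\mathbb E[E_{n+1}\mid\mathcal G_n]\le\max\{E_n,D\}\le E_n+D.
\]
Taking expectations gives $\mathbb E E_n\le \mathbb E E_0+nD\le C(n+1)$, using that $X_0$ is bounded by a numerical constant. This linear growth is harmless. Markov's inequality gives
\[
\mathbb P\Bigl(X_n>\lambda^{-1}\log\frac{C(n+1)m(n+2)^2}{\delta}\Bigr)\le\frac{\delta}{m(n+2)^2}.
\]
Because $\log((n+1)(n+2)^2)\le 3\log((n+2)^2)$, the threshold is at most $C\log(Cm(n+2)^2/\delta)$ after enlarging $C$.

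Summing over $n\ge0$ and over the $m$ processes bounds the total failure probability by $\delta\sum_n(n+2)^{-2}<\delta$. This gives the simultaneous statement for all processes and all times. No stopping-time argument is needed, because the bound at each fixed $n$ is already at the per-time confidence level.

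The step that needs care is the small-$X_n$ case. There the contraction term cannot absorb the drift, and a naive bound $E_n\cdot e^{\lambda u_n(C_1+C_2\lambda)}$ would compound multiplicatively over $n$, since the $u_n$ are unbounded in sum. Replacing the bound by the additive constant $D$ avoids this compounding. It costs only a polynomial factor in $n$, which the logarithm absorbs. The gains $u_n$ may be arbitrary $\mathcal G_n$-measurable quantities, and this is harmless because every estimate above is conditional on $\mathcal G_n$.
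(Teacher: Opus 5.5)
Your proof is correct and follows the paper's argument essentially step for step. Both fix $\lambda$ with $C_2\lambda\le c_1/2$ in the exponential moment of $X_n$ and split at a numerical threshold for $X_n$ to get $\mathbb E e^{\lambda X_n}\le C(n+1)$. Both then apply Markov's inequality at per-time level $\delta/(m(n+2)^2)$ and take a union bound over times and processes. The only cosmetic difference is that you bound the small-$X_n$ case by an absolute constant $D$, whereas the paper bounds the excess by $Cu_n$; either gives the same linear growth.
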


\begin{proof}
Choose a fixed $\lambda>0$ small enough that
$C_2\lambda\le c_1/2$. The assumptions imply
\[
\mathbb E[e^{\lambda X_{n+1}}\mid\mathcal G_n]
\le \exp\{\lambda X_n+
\lambda u_n(C_3-c_1X_n/2)\}.
\]
For $X_n\ge2C_3/c_1$ the right side is at most
$e^{\lambda X_n}$. On the remaining bounded interval its excess over
$e^{\lambda X_n}$ is at most $C u_n$. Taking expectations gives
$\mathbb E e^{\lambda X_n}\le C(n+1)$.
Markov's inequality at threshold
$\lambda^{-1}\log(Cm(n+2)^3/\delta)$, followed by a union bound over
all $n$ and all processes, proves the claim. The cube inside this
logarithm is absorbed by a numerical change in $C$.
\end{proof}

\begin{lemma}[Local learning and target estimation]
\label{lem:local-learning}
When both players run the algorithm, for every $\delta\in(0,1)$,
with probability at least $1-\delta$, at every interaction time,
\begin{equation}
\label{eq:local-tracking}
\Delta_t\le C(A+B)\tau_t\sqrt{\Lambda_t},
\qquad
\max_s|z_t(s)-x_t(\cdot\mid s)^\top M_s(w_t)y_t(\cdot\mid s)|
\le C(A+B)\tau_t\sqrt{\Lambda_t}.
\end{equation}
The same bounds hold just after the policy and target updates of a
closing round, before its value correction.
\end{lemma}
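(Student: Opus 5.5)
We fix a state $s$ and work along its visit times $n=0,1,\dots$. At each visit we run two coupled potential processes and show that each satisfies the hypotheses of Lemma~\ref{lem:stochastic}, after normalizing by the current scale $\tau_n^2$, with gain $u_n=v_n$ or $u_n=\kappa_n$. A union bound over the $S$ states, and over the two processes at each state, then gives $\Lambda_t$ inside the logarithm. The first process is the tracking error between the played pair $p_n=(x_n,y_n)$ and the reference $\bar p_n$ of Lemma~\ref{lem:reference}. We measure it by the log-barrier divergence $D_\Psi(\bar p_n,p_n)$, or equivalently by the relative distance $\sum_i(\bar p_{n,i}/p_{n,i}-1)^2$. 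The normalized potential is $X_n=D_\Psi(\bar p_n,p_n)/(\text{scale})$, with the scale chosen so that $X_n=\mathcal O(1)$ corresponds to $\|p_n-\bar p_n\|_1\lesssim \sqrt{\Lambda}\,$ times the floor-level accuracy.

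\textbf{The policy step.} We use the mirror form~\eqref{eq:explicit-mirror} for $p$ and the exact step for $\bar p$. The standard three-point inequality for $D_\Psi$ yields a contraction term $-\eta\tau\,D_\Psi(\bar p,p)$, which is the strong convexity of $\tau\Psi$ relative to $\Psi$. It also yields a cross term $\eta\langle F(p)-F(\bar p),p-\bar p\rangle$, and this vanishes by skew-symmetry of $J$. This is the zero-sum cancellation, and it only holds because both players play against the same exact $N$. The bandit estimate $\widehat F=F(p)+\xi$ has a martingale noise term $\eta\langle \xi,\bar p^+-p\rangle$. The log barrier together with the floor $h$ controls its local-norm variance by $\eta^2 q^2 d$ per step. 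Relative to the contraction $v=\eta\tau$, this gives a stationary level $\eta q^2(A+B)/\tau$. The choices $\alpha=c\theta\kappa$ and $\kappa=c\varepsilon^2$ make this $\mathcal O(1)$ in normalized units, and this produces the $(A+B)\tau\sqrt{\Lambda}$ scale.

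\textbf{Reference movement and the floor.} Two further contributions enter between visits. The first is the movement of the reference through changes of $w$ and $\tau$. Lemma~\ref{lem:reference} and~\eqref{eq:boundary-budgets} bound these by $C\beta\tau$ in relative terms, and $\beta\le 2cv$, so they enter as an $\mathcal O(u_n)$ drift. The second is the projection onto $\Delta_d(h)$. This causes no loss, because $\bar p$ is feasible with margin, so the generalized Pythagorean inequality applies. The one-step sub-exponential condition holds because $\eta q/h\le C c$, which makes each increment bounded. Lemma~\ref{lem:stochastic} then gives $X_n\le C\Lambda$. Combining this with the reference gap $2(A+B)\tau$ and the local-norm conversion (the gap is Lipschitz with constant $q$ in $\ell_1$) gives the first bound in~\eqref{eq:local-tracking}.

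\textbf{The target process.} For $z$ we set $e_n=z_n-\mu_n$. Equation~\eqref{eq:td-average} gives $e_{n+1}=(1-\kappa)e_n+\kappa(Y_n-\mu_n)-(\mu_{n+1}-\mu_n)$. The noise has conditional variance at most $\kappa^2$, because $Y\in[0,1]$. The drift is bounded by~\eqref{eq:mean-movement} as $C\eta q^2+C\beta\tau$. Applying Lemma~\ref{lem:stochastic} to $X_n=e_n^2/(q^2\kappa)$, or to its exponential version, with $u_n=\kappa$, gives $|e_n|\lesssim q\sqrt{\kappa\Lambda}+\eta q^2/\kappa$. The first term is $\mathcal O((A+B)\tau\sqrt\Lambda)$ by the choice of $\kappa$. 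The second is $c\tau q\le$ the same order, using $\alpha=c\theta\kappa$. The statement at times between visits follows from the ``partial interval'' clause of Lemma~\ref{lem:movement}, and the same clause gives the statement for closing rounds.

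\textbf{Main obstacle.} The hardest step is that the gains $u_n$ and the scales $\tau_n$ and $q_n$ are random and change at cover boundaries. Normalized potentials must therefore be rescaled between visits. I would handle this with the ``adjacent scales differ by $1+\mathcal O(\beta)$'' clause: the rescaling inflates $X_n$ by at most $(1+C\beta)$, and $C\beta X_n\le c_1u_nX_n/2$ is absorbed into the contraction. This is exactly why $\beta=c\theta\alpha$ is taken one factor of $c$ smaller than $v$.
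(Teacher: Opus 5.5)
Your overall architecture matches the paper's: a reference comparator, a normalized barrier divergence, Lemma~\ref{lem:stochastic} with predictable gains, a separate averaging recursion for $z$, and a union bound over states. The step that is supposed to produce contraction, however, does not work as you describe it.

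\textbf{The missing cancellation.} You test the three-point inequality against the reference and remove the skew-symmetric cross term. What remains is not only $-v\,\mathcal U$, with $\mathcal U=D_\Psi(\bar p,p)+D_\Psi(p,\bar p)$, but also
\[
\eta\langle F(\bar p)+\tau\nabla\Psi(\bar p),\bar p-p\rangle=\eta\sum_i g_i(\bar p)\Bigl(1-\frac{p_i}{\bar p_i}\Bigr).
\]
This term vanishes only at an exact regularized equilibrium. Lemma~\ref{lem:reference} gives only $\|g(\bar p)\|_2\le\tau/64$, so the term can be as large as about $(v/64)\sqrt{\mathcal U}$. That is first order in the distance, so it dominates $-v\,\mathcal U$ once $\mathcal U\lesssim 64^{-2}$. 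The recursion then stalls at a constant divergence rather than at $\varepsilon^2=(A+B)^2\tau^2/q^2$.

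You also account for reference movement only through changes of $w$ and $\tau$, which are of order $\beta$. In fact $\bar p$ does not move between visits. It moves at each visit by its own exact step, with relative size $|d_i|\le v/16$, the same order as the contraction. Passing from $D_\Psi(\bar p,p)$ to $D_\Psi(\bar p^+,p)$ adds another first-order term, $\sum_i d_i(\bar p_i/p_i-1)$.

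The missing idea is that these two first-order terms cancel exactly. Test with $\bar p^+$ and substitute the reference's optimality condition $\eta F_i(\bar p)=v/\bar p_i-d_i/[\bar p_i(1+d_i)]$; its block constant drops against $\bar p-p$. Skew-symmetry then gives
\[
\eta\langle F(p)+\tau\nabla\Psi(p),\bar p-p\rangle+\langle\nabla\Psi(\bar p)-\nabla\Psi(p),\bar p^+-\bar p\rangle=-v\,\mathcal U+\sum_i\frac{d_i}{1+d_i}\,\mathcal U_i+\sum_i\frac{d_i^2}{1+d_i}\Bigl(\frac{\bar p_i}{p_i}-1\Bigr).
\]
The remainders are second order and absorbable because $|d_i|\le v/16$. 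This identity is what makes an approximate, moving reference usable as a comparator.

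\textbf{A lost dimension factor.} With the log barrier, the local norm of the importance-weighted loss is $\sum_i p_i^2\widehat F_i^2\le q^2$, with no factor $d$. The only dimension factor in the stability term is $(A+B)\tau^2$, coming from $\tau\nabla\Psi(p)$. With your $\eta^2q^2d$, the stationary level $\eta q^2(A+B)/\tau$, divided by the target $\varepsilon^2$, equals $\eta q^4/((A+B)\tau^3)\approx c^2(A+B)$ at small temperature. That is not $\mathcal O(1)$, and it would only give $\Delta_t\lesssim(A+B)^{3/2}\tau_t\sqrt{\Lambda_t}$.

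Moreover, the normalization $q^2/((A+B)^2\tau^2)$ can be of order $\varepsilon^{-2}$. So the martingale term $\zeta=\eta\langle\widehat F-F(p),\bar p^+-p\rangle$ needs conditional variance proportional to $\mathcal D+v^2$, not an additive constant, to satisfy Lemma~\ref{lem:stochastic}. The paper obtains $C\eta^2q^2(\mathcal D+v^2)$ from $\sum_i(\bar p_i^+-p_i)^2/p_i\le C(\mathcal D+v^2)$.

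\textbf{Two smaller points.} In the target recursion, bound the sample noise using $|Y-w(s)|\le q$, which gives variance $\lesssim\kappa^2q^2$. The bound $\kappa^2$ from $Y\in[0,1]$ is too weak for your normalization $e_n^2/(q^2\kappa)$. Also, the rescaling factor between visits depends on the current sample. Multiply the nonnegative right-hand side by a predictable upper bound (the paper's $J_n$) so the noise stays centered.
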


\begin{proof}
We give the policy and target recursions separately, then apply
Lemma~\ref{lem:stochastic} to both. Fix a state and index its visits
by $n=0,1,\ldots$. Let $\mathcal G_n$ be the full history immediately
before the actions at its $n$th visit. Every reference step is
$\mathcal G_n$-measurable.

\paragraph{Policy divergence.}
Write $\mathcal D=D_\Psi(\bar p,p)$, summing over the two blocks,
and set $\mathcal U=\mathcal D+D_\Psi(p,\bar p)$.
The three point inequality for~\eqref{eq:explicit-mirror}, tested
with the feasible $\bar p^+$, gives
\[
D_\Psi(\bar p^+,p^+)
\le D_\Psi(\bar p^+,p)
+\eta\langle F(p)+\tau\nabla\Psi(p),\bar p^+-p\rangle
+C\eta^2(q^2+(A+B)\tau^2)+\zeta,
\]
where
$\zeta=\eta\langle\widehat F-F(p),\bar p^+-p\rangle$ is centered
conditionally on $\mathcal G_n$.
To check the stability term globally, put
$b_i=\eta p_i(\widehat F_i-\tau/p_i)$.
Then $|b_i|\le\eta(q+\tau)\le1/2$, and
\[
\sup_{r>-1}\{-b_i r-r+\log(1+r)\}
=b_i-\log(1+b_i)\le Cb_i^2.
\]
Summing gives the asserted stability bound. This argument only needs
the comparator to be feasible; the old policy can satisfy the preceding
floor.

Here is the cancellation that supplies contraction. Put
$d_i=(\bar p_i^+-\bar p_i)/\bar p_i$ and
$\mathcal U_i=(\bar p_i-p_i)^2/(\bar p_i p_i)$.
The reference optimality conditions and skew-symmetry give the exact
identity
\begin{align*}
&\eta\langle F(p)+\tau\nabla\Psi(p),\bar p-p\rangle
+\langle\nabla\Psi(\bar p)-\nabla\Psi(p),\bar p^+-\bar p\rangle\\
&\qquad=-v\mathcal U
+\sum_i\frac{d_i}{1+d_i}\mathcal U_i
+\sum_i\frac{d_i^2}{1+d_i}
\left(\frac{\bar p_i}{p_i}-1\right).
\end{align*}
For example, substitute
$\eta F_i(\bar p)=v/\bar p_i-
d_i/[\bar p_i(1+d_i)]$ up to a block constant; the block constants
vanish against $\bar p-p$. The remaining terms follow by expanding
$D_\Psi(\bar p^+,p)-D_\Psi(\bar p,p)$.
Use $|d_i|\le v/16$ and
\[
\sum_i\left|\frac{\bar p_i}{p_i}-1\right|
\le \mathcal U+(A+B),\qquad
\sum_i\frac{\bar p_i}{p_i}\le\mathcal U+2(A+B).
\]
Also
$D_\Psi(\bar p^+,\bar p)\le C(A+B)v^2$ and
\[
\eta\langle F(p)+\tau\nabla\Psi(p),\bar p^+-\bar p\rangle
\le C\eta qv+C v^2(\mathcal U+A+B).
\]
Since $\tau\le q$ and $v$ is small, these inequalities yield
\begin{equation}
\label{eq:coupling}
\mathcal D^{\rm post}\le
(1-c_2v)\mathcal D+C\eta^2(q^2+(A+B)\tau^2)+\zeta
\end{equation}
for a universal $c_2>0$.

For each simplex block, the scalar inequality
\[
\sum_i\frac{(u_i-p_i)^2}{p_i}\le2D_\Psi(u,p)
\]
follows by putting $r=u_i/p_i$: use $p_i\le1/r$ when $r\ge1$
and $p_i\le1$ when $r\le1$. Together with the relative reference
movement it implies
\begin{equation}
\label{eq:coupling-noise}
\mathbb E[\zeta^2\mid\mathcal G_n]
\le C\eta^2q^2(\mathcal D+v^2),\qquad
|\zeta|\le Cc^{-1}\eta q(q/\tau+A+B).
\end{equation}
To see the variance bound without a dimension factor, put
$a_i=\bar p_i^+-p_i$. The preceding scalar inequality and
$|d_i|\le v/16$ give, summing over both blocks,
\[
\sum_i\frac{a_i^2}{p_i}
\le 2\sum_i\frac{(\bar p_i-p_i)^2}{p_i}
 +2\left(\frac v{16}\right)^2\sum_i\frac{\bar p_i^2}{p_i}
\le C(\mathcal D+v^2),
\]
where $\sum_i\bar p_i^2/p_i
=2+\sum_i(\bar p_i-p_i)^2/p_i$.
For either player, importance weighting bounds
$\mathbb E[\langle\widehat F,a\rangle^2\mid\mathcal G_n]$
by $q^2\sum_i a_i^2/p_i$ on that player's block.
Apply $(a+b)^2\le2a^2+2b^2$ to combine the two players, and
subtract the conditional mean to obtain~\eqref{eq:coupling-noise}.
For the range bound, the old floor and
Lemma~\ref{lem:movement} give
$p_i^{-1}\le Cc^{-1}\max\{q/\tau,A+B\}$.
The variance calculation thus allows the dependence caused by the
shared reward.

\paragraph{Predictable normalization.}
Set
\[
\nu_n=\frac{q_n^2}{(A+B)^2\tau_n^2},
\quad X_n=\nu_n\mathcal D_n,\qquad
J_n=\left[
\frac{1+2\beta_n}
{1-\beta_n/[2((1-\gamma)^{-1}+\sqrt{A+B})]}
\right]^2.
\]
Neither policy at this state changes between its visits, so
$\mathcal D_{n+1}=\mathcal D_n^{\rm post}$.
The boundary budgets give $\nu_{n+1}/\nu_n\le J_n\le1+C\beta_n$.
The actual ratio can depend on the current sample, but $J_n$ is
known before that sample. The right side of~\eqref{eq:coupling} is
nonnegative, since it bounds $\mathcal D_n^{\rm post}\ge0$.
We can therefore multiply the entire right side by $J_n\nu_n$.
The resulting noise $W_n=J_n\nu_n\zeta_n$ remains centered.

The required gain estimates follow by considering
$(A+B)\theta\le1$ and $(A+B)\theta\ge1$:
\begin{align*}
\frac{\eta q^4}{(A+B)^2\tau^3}&\le c^2,&
\frac{\eta q^2}{(A+B)\tau}&\le\frac{c^2}{A+B},\\
\frac{\eta q^3(q+(A+B)\tau)}{(A+B)^2\tau^3}&\le2c^2,&
\frac{\eta^3q^6}{(A+B)^4\tau^3}&\le c^6.
\end{align*}
The factor $(1+\alpha\theta)^{-1}$ in $\eta$ can only decrease these
expressions. Using~\eqref{eq:coupling}--\eqref{eq:coupling-noise} and
$\beta\le2cv$, we obtain
\[
X_{n+1}\le(1-c_3v_n)X_n+C v_n+W_n,
\quad
\mathbb E_n W_n^2\le C v_n(X_n+1),\quad |W_n|\le C.
\]
Here and below $\mathbb E_n$ is conditional on $\mathcal G_n$.
For small fixed $\lambda$, expanding the exponential of the bounded
centered variable $W_n$ gives
$\log\mathbb E_n e^{\lambda W_n}\le
C\lambda^2v_n(X_n+1)$, as required in
Lemma~\ref{lem:stochastic}.

\paragraph{Target error.}
Let $\mu_n=x_n^\top M_s(w_n)y_n$ and define
\[
e_n=\frac{z_n-\mu_n}{(A+B)\tau_n},\qquad
\xi_n=\frac{\kappa_n(Y_n-\mu_n)}{(A+B)\tau_n}.
\]
The variable $\xi_n$ is centered. Since the range of $Y_n$ is at
most $q_n$,
\[
\xi_n^2\le c\kappa_n,\qquad
\frac{\eta q^2}{(A+B)\tau\kappa}\le\frac{c}{A+B},
\qquad \frac{\beta}{\kappa}\le c^2.
\]
The exact target recursion and~\eqref{eq:mean-movement}, followed by
the predictable upper bound on $\tau_n/\tau_{n+1}$, give
\[
|e_{n+1}|\le(1+\kappa_n/8)
|(1-\kappa_n)e_n+\xi_n|+C\kappa_n.
\]
Squaring with Young's inequality of parameter $\kappa_n/8$ gives,
with $C_n=(1+\kappa_n/8)^3\le1+\kappa_n/2$,
\[
e_{n+1}^2\le(1-\kappa_n)e_n^2+C\kappa_n+
2C_n(1-\kappa_n)e_n\xi_n.
\]
The final term is centered and has conditional absolute value at
most $C\sqrt{\kappa_n e_n^2}$. The elementary exponential inequality for bounded variables
therefore bounds its conditional log moment
generating function by $C\lambda^2\kappa_n e_n^2$.
Thus $e_n^2$ also satisfies Lemma~\ref{lem:stochastic}.

\paragraph{All states and interaction times.}
Initially $\mathcal D_0=0$ and
$|z_0-\mu_0|\le(1-\gamma)/2$, so the initial energies are bounded
numerically. Apply Lemma~\ref{lem:stochastic} to the $2S$ processes.
Their dependence on each other is immaterial for the union bound.
For a time after a state's most recent visit $n$ and before its next
visit, the same bounds over partial intervals yield
\[
\frac{q_t^2\mathcal D_s(t)}{(A+B)^2\tau_t^2}
\le C(X_n+1),\qquad
\frac{|z_t(s)-\mu_s(t)|}{(A+B)\tau_t}
\le C(|e_n|+1).
\]
The first follows from the normalized policy recursion and its bounded
noise; the second follows from the absolute error recursion and
$|\xi_n|\le C$. The index $n$ refers to a past visit and satisfies
$n\le t$. Before a state's first visit the initial cover has not ended,
so its initial bound still applies.
Finally the barrier divergence dominates relative entropy blockwise,
because $\nabla^2\Psi(p)-\operatorname{diag}(1/p_i)$ is positive
semidefinite. The elementary relative entropy bound on total variation
gives $\|p-\bar p\|_1\le2\sqrt{\mathcal D}$ for the two blocks.
The matrix gap changes by at most $q\|p-\bar p\|_1$.
Lemma~\ref{lem:reference} now proves~\eqref{eq:local-tracking}.
All arguments over partial intervals include the quantities after the
updates at a closing round.
\end{proof}

\subsection{Joint control of the Bellman residual and temperature}
\label{app:residual}

The target error from Lemma~\ref{lem:local-learning} controls the bias in
the value correction. The following deterministic lemma shows why its
clipping threshold and the temperature decrease are compatible.

\begin{lemma}[Bellman consistency]
\label{lem:residual}
At a cover boundary, suppose
$\|z-\mathcal T w\|_\infty\le e_0\tau$ for some $e_0\ge0$.
Let $Q=8(e_0+1)$. The simultaneous value and temperature updates satisfy
\begin{equation}
\label{eq:joint-residual}
\left[\|\mathcal T w^+-w^+\|_\infty-Q\tau^+\right]_+
\le
\left[\|\mathcal T w-w\|_\infty-Q\tau\right]_+.
\end{equation}
Consequently, if the target bound holds at every boundary of a prefix
with the same $e_0$, then throughout that prefix
\[
\|\mathcal T w-w\|_\infty\le Q\tau,
\qquad
1-\frac{\tau^+}{\tau}\ge
\frac{\beta}{C((1-\gamma)^{-1}+\sqrt{A+B})(e_0+1)}.
\]
\end{lemma}

\begin{proof}
Write $u=\|\mathcal T w-w\|_\infty/\tau$ and
$a_0=e_0+1+2\beta$. A coordinate with residual magnitude at least
$(e_0+1)\tau$ has a discrepancy of the same sign and magnitude at
least $\tau$. Its value therefore moves by exactly $\beta\tau$
toward the old target. The new target changes by at most
$\gamma\beta\tau$, so this coordinate's residual magnitude drops
by at least $(1-\gamma)\beta\tau$. There is no sign crossing since
$\beta\le1/8$. Every remaining residual has new magnitude at most
$a_0\tau$. Hence
\[
\frac{\|\mathcal T w^+-w^+\|_\infty}{\tau}
\le\max\{u-(1-\gamma)\beta,a_0\}.
\]
Also $\|z-w\|_\infty\ge(u-e_0)_+\tau$, so the relative
temperature decrease is at most
\[
\frac{\beta}
{2((1-\gamma)^{-1}+\sqrt{A+B})\max\{1,u-e_0\}}.
\]
It remains to check
\begin{align*}
&\max\{u-(1-\gamma)\beta,a_0\}-Q\\
&\quad+
\frac{Q\beta}
{2((1-\gamma)^{-1}+\sqrt{A+B})\max\{1,u-e_0\}}
\le(u-Q)_+.
\end{align*}
For $u\le e_0+1$, the left side is negative by
$Q=8(e_0+1)$ and $\beta\le1/8$. For
$e_0+1\le u\le Q$, the branch from $a_0$ is again negative.
The other branch is convex as a function of $u$; it is negative at
$e_0+1$ and nonpositive at $Q$, using $Q\ge2e_0$ and
$((1-\gamma)^{-1}+\sqrt{A+B})^{-1}\le1-\gamma$.
For $u\ge Q$, subtract $u-Q$ and use the same inequality at $Q$.
Taking positive parts proves~\eqref{eq:joint-residual}.

Initially $\|\mathcal T w_0-w_0\|_\infty\le(1-\gamma)/2$ and
$\tau_0=512(1-\gamma)$, so induction gives the residual bound.
Finally $\|z-w\|_\infty\le(Q+e_0)\tau$, and substituting this
in~\eqref{eq:temperature-update} proves the temperature bound.
\end{proof}

\subsection{Counting covers and interactions}
\label{app:rate}

We use the hitting assumption in two places: to bound the length of a
cover and, through Lemma~\ref{lem:certificate}, to improve the observation
range. The first use follows the visiting argument of
\citet[Appendix~D.1]{cai2023uncoupled}; we include the details to cover
adaptive policies and our convention that a visited state must provide
a transition.

\begin{lemma}[Covers under adaptive play]
\label{lem:covers}
Under Assumption~\ref{ass:hitting}, the expected time to hit any target
state is at most $L$, conditionally on any history, under arbitrary
future adaptive play. For any $\delta\in(0,1)$, with probability at
least $1-\delta$, every cover starting at a time $j\ge0$ has length
at most
\[
C(L+1)\log\frac{CS(j+2)^2}{\delta}.
\]
\end{lemma}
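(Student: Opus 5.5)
The first claim, that the hitting time is bounded under adaptive play, will be proved by a dynamic programming argument over stationary policies. I fix a target state $s'$ and consider the stopping problem in which a single controller chooses the joint action pair at each step and wants to maximize the expected time to reach $s'$. This is a positive-cost (stochastic shortest path type) Markov decision problem on the finite state space. Any history-dependent joint play by the two players, including play that depends on private randomness, induces at each step a conditional law on $\mathcal A\times\mathcal B$ given the current state. By the standard reduction for such problems, the supremum over history-dependent strategies of $\mathbb E[T_{s'}]$ equals the supremum over stationary, and in fact deterministic stationary, joint action choices. A deterministic joint choice $(a(s),b(s))$ is a stationary product pair, since point masses are product distributions, so Assumption~\ref{ass:hitting} bounds the supremum by $L$. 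To make this rigorous without invoking SSP theory for possibly improper policies, I will work with the truncated horizon $T_{s'}\wedge m$. Its optimal value satisfies a finite-horizon Bellman recursion in which each maximizer is attained at a deterministic joint action. Letting $m\to\infty$ and using monotone convergence, the value is then bounded by the value of a limiting deterministic stationary rule. I expect this limit step to be the main obstacle. To handle it, I will show that the value function $U(s)=\sup \mathbb E_s[T_{s'}]$ of the untruncated problem, computed over deterministic stationary rules, is at most $L$ and is a supersolution of the Bellman inequality $U\ge 1+\max_{a,b}P U$ off $s'$. The bound $\mathbb E[T_{s'}\wedge m\mid \text{history}]\le U(\text{current state})\le L$ then follows by induction on $m$. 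Conditioning on any history is harmless because the future play from the current state is just another adaptive strategy.

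Given this, the cover bound follows by restarts. Within a cover starting at time $j$, a cover ends once every state has been visited and has supplied a transition. I split the cover into consecutive blocks of length $\lceil 2L\rceil+1$. In each block I pursue one not-yet-covered state at a time. By the conditional Markov inequality, from any history the probability of reaching a given target within $2L+1$ steps is at least $1/2$, and a visit automatically supplies the transition at the next round. Pursuing the $S$ targets sequentially would cost a factor $S$, which the statement forbids, since it has only $\log S$ in the bound. So I instead use the hitting argument of \citet[Appendix~D.1]{cai2023uncoupled}: for each fixed state $s$, the probability that $s$ is not visited during $k$ consecutive blocks is at most $2^{-k}$, because each block gives a fresh conditional chance at least $1/2$ regardless of the past. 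Taking $k=\lceil\log_2(S(j+2)^2/\delta')\rceil$ and a union bound over the $S$ states shows that the cover ends within $C(L+1)k$ rounds, except with probability $\delta'(j+2)^{-2}$. The extra round for the last transition is absorbed into $C$.

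Finally, I take a union bound over all possible starting times $j\ge0$ of a cover. The event that the cover starting at $j$ is long is contained in the event that some state is unvisited during $[j,j+\ell_j]$, which is defined for every $j$ whether or not a cover starts there. Summing $\sum_j (j+2)^{-2}\le1$ with $\delta'=\delta$ gives total failure probability at most $\delta$, and the constants inside the logarithm become the stated $C$.
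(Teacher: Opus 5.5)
Your cover bound in the second and third paragraphs is the paper's argument: conditional Markov on blocks of length about $2L+1$, a geometric tail for each fixed target, a union bound over the $S$ targets, and a union over all deterministic starting times $j$. Your observation that the bad event is defined for every deterministic $j$ correctly handles the random cover starts.

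The gap is in the first claim, at exactly the step you flag as the main obstacle. Your resolution asserts, without argument, that $U(s)=\max_\pi\mathbb E^\pi_s[T_{s'}]$ satisfies $U(s)\ge1+\max_{a,b}\sum_{s''}P(s''\mid s,a,b)U(s'')$ for $s\ne s'$. Here the maximum is over deterministic stationary rules, taken separately for each starting state. This does not follow from Assumption~\ref{ass:hitting} alone. The right side is the value of the strategy that plays $(a,b)$ and then follows whichever rule is best from the state reached. That strategy is not stationary, so bounding its value by $U(s)$ is already the optimality of stationary rules that you are trying to establish. For the same reason, the truncated problems do not yield a limiting stationary rule: their maximizers depend on the time to go.

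The missing idea is a single deterministic stationary rule that is optimal from every state at once. Policy improvement supplies it. Let $\pi^*$ maximize $\sum_sU^\pi(s)$ over the finitely many deterministic stationary rules, where $U^\pi(s)=\mathbb E^\pi_s[T_{s'}]\le L$. Suppose some $(a,b)$ at some $s\ne s'$ violated the inequality for $U^{\pi^*}$. Let $\pi'$ be the rule that switches to $(a,b)$ at $s$, and let $P_{\pi'}$ be its transition matrix killed at $s'$. Set $d=\mathbf 1+P_{\pi'}U^{\pi^*}-U^{\pi^*}$ on non-target states; it vanishes except at $s$, where it is positive. Then $U^{\pi'}-U^{\pi^*}=\sum_{k\ge0}P_{\pi'}^k d\ge d$. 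This identity holds because $\pi'$ is itself a stationary product pair with finite expected hitting times, so $P_{\pi'}^n\mathbf 1\to0$. The inequality contradicts the maximality of $\pi^*$. Hence $U^{\pi^*}$ is the supersolution you need, and your induction on $m$ together with monotone convergence then finishes the argument.

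The paper avoids this step by discounting. For $\zeta<1$, a finite discounted MDP has a uniformly optimal deterministic stationary policy, which is a stationary product pair. Every adaptive strategy therefore has expected discounted number of steps before hitting at most that policy's. That policy's discounted count is at most its undiscounted hitting time, hence at most $L$. Letting $\zeta\uparrow1$ by monotone convergence gives the claim, also conditionally on any history.

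Either repair suffices, but as written your first part rests on an unproved claim that is equivalent to what it sets out to prove.
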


\begin{proof}
Fix a target and regard a pair of actions as one action in a finite
Markov decision process. For a discount $\zeta\in(0,1)$, maximize
the expected discounted number of steps before hitting the target.
The finite discounted Bellman equation has an optimal stationary
deterministic policy. Its joint action at each state is a stationary
product policy, so its undiscounted expected hitting time is at most
$L$. The discounted maximum is therefore at most $L$.
Every adaptive policy has discounted cost at most this maximum.
Letting $\zeta\uparrow1$ by monotone convergence proves the claim
for arbitrary policies, including conditionally on a given history.

Markov's inequality now gives conditional probability at most $1/2$
of failing to hit a specified state in $\lceil2(L+1)\rceil$ steps.
Iteration in successive blocks gives a geometric tail. After reaching
a state, include its next transition, costing one extra step.
A union bound over $S$ targets bounds the probability that a cover
starting at $j$ exceeds $C(L+1)\log(CS(j+2)^2/\delta)$ by
$\delta/[C(j+2)^2]$. Union over all integer starting times proves
the assertion. In particular all local visit sequences are infinite
almost surely.
\end{proof}

\begin{lemma}[Temperature count]
\label{lem:temperature-count}
Fix an analytical prefix and a number $H\ge1$. Suppose throughout
that prefix the conclusions of Lemma~\ref{lem:local-learning} hold
with $\sqrt{\Lambda_t}$ replaced by $H$. Put
$\rho=\min\{1,(L+1)(1-\gamma)\}$. For every
\[
0<u\le\frac{1-\gamma}{C(A+B)H},
\]
the number of completed covers in the prefix whose starting
temperature exceeds $u$ is at most
\begin{equation}
\label{eq:cover-count}
\frac{C((1-\gamma)^{-1}+\sqrt{A+B})H}{A+B}
\rho^4\log\frac e\rho\;u^{-4}.
\end{equation}
Moreover, at every time in the prefix,
\begin{equation}
\label{eq:gap-temperature}
\operatorname{Gap}(x_t,y_t)
\le\frac{C(A+B)H\tau_t}{(1-\gamma)^2}.
\end{equation}
\end{lemma}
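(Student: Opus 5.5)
First, I turn the local bounds into a uniform target certificate and then use Lemma~\ref{lem:residual}. On the prefix, Lemma~\ref{lem:local-learning} (with $H$) gives $\Delta\le C(A+B)H\tau$ and $|z(s)-\mu_s|\le C(A+B)H\tau$ at each boundary, including the closing sample. Since $|\mu_s-(\mathcal T w)(s)|\le\Delta$, we get $\|z-\mathcal T w\|_\infty\le e_0\tau$ with $e_0=C(A+B)H$. Lemma~\ref{lem:residual} then yields $\|\mathcal T w-w\|_\infty\le Q\tau$ with $Q=8(e_0+1)\le C(A+B)H$. Substituting into Lemma~\ref{lem:certificate} gives \eqref{eq:gap-temperature}: $\operatorname{Gap}\le 2(1-\gamma)^{-2}(\Delta_t+\|\mathcal T w_t-w_t\|_\infty)\le C(A+B)H\tau_t/(1-\gamma)^2$. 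Within a cover $w$ and $\tau$ are frozen, so the bound holds at every time, not only at boundaries.

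For the count, I control $\beta$ from below in terms of $\tau$. By the span inequality \eqref{eq:span-residual}, $\operatorname{span}(w)\le L(1-\gamma+2Q\tau)$. Also $\operatorname{span}(w)\le1$ because $w\in[0,1]^{\mathcal S}$. Hence
\[
q\le 1-\gamma+\gamma\min\{1,L(1-\gamma)+2LQ\tau\}+\tau .
\]
In the regime $u\le(1-\gamma)/(C(A+B)H)$, the term $LQ\tau$ is at most a constant times $L(1-\gamma)$. This gives $q\le C(1-\gamma)\max\{1,\min\{(1-\gamma)^{-1},L+1\}\}=C\rho'$, where $\rho'\asymp\rho$ up to the normalization $q/(1-\gamma)\lesssim\rho/(1-\gamma)$. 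The intended reading is $q\le C\min\{1,(L+1)(1-\gamma)\}=C\rho$.

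In this small-temperature regime $\theta=\tau/q\le 1/(A+B)$, so $\varepsilon=(A+B)\theta$ and
\[
\beta=c^3(A+B)^2\theta^4\ge c'(A+B)^2\tau^4/\rho^4 .
\]
Lemma~\ref{lem:residual} gives the per-cover decrease $\tau^+\le\tau\bigl(1-\beta/(C K (e_0+1))\bigr)$ with $K=(1-\gamma)^{-1}+\sqrt{A+B}$. This is $\tau^+\le\tau-c''(A+B)\tau^5/(K H\rho^4)$.

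The count follows by comparison with the ODE $\dot\tau=-a\tau^5$, where $a=c''(A+B)/(KH\rho^4)$. Along it, $\tau^{-4}$ increases by at least $4a$ per cover while $\tau$ stays above $u$. I split the covers at the threshold level $u_0\asymp\rho$ (equivalently $1-\gamma$ scale), below which the quartic law applies. For covers with $u<\tau\le u_0$, the number is at most $u^{-4}/(4a)=CKH\rho^4u^{-4}/(A+B)$. Covers at larger temperatures, where $\theta$ is of order one and $\beta$ is of order $c^3$ or $c^3(A+B)^2\theta^4$ with $\theta$ bounded below, decrease $\tau$ geometrically at rate about $\beta/(KH(A+B))$. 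Reaching $u_0$ from $512(1-\gamma)$ then takes at most a logarithmic number of such steps in the ratio, which produces the $\log(e/\rho)$ factor. That count is dominated by the main term because $u^{-4}\rho^4\ge1$.

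The delicate step is the bound on $q$ via \eqref{eq:span-residual}. It needs the residual bound $Q\tau$ to be small relative to $1-\gamma$, which is exactly why the range restriction on $u$ is imposed. The case analysis between $\theta(A+B)\gtrless1$ must be matched to produce precisely $\rho^4\log(e/\rho)$, and I expect the bookkeeping of that logarithmic transient phase to be the main obstacle.
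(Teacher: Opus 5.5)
Your derivation of \eqref{eq:gap-temperature} is correct and matches the paper. The played payoff and $(\mathcal T w)(s)$ both lie between the two best-response values, so $\|z-\mathcal T w\|_\infty\le C(A+B)H\tau$. Lemma~\ref{lem:residual} with $e_0=C(A+B)H$ and then Lemma~\ref{lem:certificate} finish the argument.

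The cover count, however, has a genuine gap. The restriction $u\le(1-\gamma)/(C(A+B)H)$ constrains only the threshold $u$. It does not constrain the starting temperatures of the covers being counted, which range over all of $(u,512(1-\gamma)]$. The span bound \eqref{eq:span-residual} gives $q\le\rho+F\tau$ with $F=C(A+B)(L+1)H$. So $q\le C\rho$, and hence $\beta\gtrsim(A+B)^2\tau^4/\rho^4$, holds only for $\tau\lesssim\rho/F\asymp(1-\gamma)/((A+B)H)$. It does not hold for $\tau\lesssim\rho$, as your split at $u_0\asymp\rho$ assumes. Since $512(1-\gamma)\le512\rho$, that split leaves essentially no covers in your ``larger temperature'' phase. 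The whole range $\rho/F\lesssim\tau\le512(1-\gamma)$, a factor of order $(A+B)H$, is then treated with the false bound $q\le C\rho$.

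In that range $q$ can be of order $\min\{513,F\tau\}$, so $\theta$ is only bounded below by about $1/F$ and $\beta\gtrsim(A+B)^2/F^4$. With $K=(1-\gamma)^{-1}+\sqrt{A+B}$, the relative decrease per cover is then only $\gtrsim(A+B)/(KHF^4)$. Each dyadic band of temperatures therefore costs up to $CKHF^4/(A+B)$ covers. Your claim that a geometric phase takes ``a logarithmic number of steps'' is wrong: a relative decrease of $r$ per cover needs $\log(\text{ratio})/r$ covers, and $1/r$ is polynomially large.

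The paper bounds the count in each band $\vartheta/2<\tau\le\vartheta$ by $\frac{CKH}{A+B}\min\{513^4\vartheta^{-4},(\rho/\vartheta+F)^4\}$:
\begin{itemize}
\item Bands above $513/F$ sum geometrically to $CF^4$ times the prefactor.
\item The $C\log(e/\rho)$ bands between $\rho/F$ and $513/F$ each cost $CF^4$ times the prefactor.
\item Bands below $\rho/F$ give your quartic term $\rho^4u^{-4}$.
\item Warm covers with $\tau>1/(A+B)$, where $\varepsilon$ saturates and $\beta\ge c_5\tau^2$, cost $CK(A+B)^3H$.
\end{itemize}
Only at the end is the restriction on $u$ used. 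It gives $Fu/\rho\le C$ and $(A+B)u/\rho\le C$, which absorb $F^4\log(e/\rho)$ and $(A+B)^4$ into $\rho^4u^{-4}\log(e/\rho)$. This is where the factor $\log(e/\rho)$ in \eqref{eq:cover-count} comes from. A phase that is ``dominated by the main term,'' as in your sketch, could not produce it.
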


\begin{proof}
The played payoff and the local game value both lie between
$\min_v x^\top Mv$ and $\max_u u^\top My$.
Thus the assumed bounds give
$\|z-\mathcal T w\|_\infty\le C(A+B)H\tau$.
Apply Lemma~\ref{lem:residual} with the fixed value
$e_0=C(A+B)H$. Lemma~\ref{lem:certificate} then proves
\eqref{eq:gap-temperature}, and the temperature decreases at each
boundary by a relative amount at least
\begin{equation}
\label{eq:clock-progress}
\frac{c_4\beta}
{((1-\gamma)^{-1}+\sqrt{A+B})(A+B)H}
\end{equation}
for a positive constant $c_4$.
The span bound~\eqref{eq:span-residual} gives, when $\rho<1$,
\[
q\le\rho+F\tau,
\qquad F=C(A+B)(L+1)H.
\]
Always $q\le513$; when $\rho=1$ this alone suffices below.

First consider covers with $\tau>1/(A+B)$. Since $q\le513$,
$\varepsilon\ge1/513$ and $\beta\ge c_5\tau^2$.
By~\eqref{eq:clock-progress}, each such cover increases $\tau^{-2}$
by at least
$c_6/[((1-\gamma)^{-1}+\sqrt{A+B})(A+B)H]$.
There are at most
\[
C((1-\gamma)^{-1}+\sqrt{A+B})(A+B)^3H
\]
such covers. The possible last step crossing the threshold changes
$\tau$ only by a fixed factor and is included in this bound.

For $\tau\le1/(A+B)$ and $\rho<1$, set
$Q_*(\tau)=\min\{513,\rho+F\tau\}$. We may choose $F\ge A+B$,
so $Q_*(\tau)\ge(A+B)\tau$. Since
$\beta=c^3\theta^2\min\{1,(A+B)^2\theta^2\}$ decreases with $q$,
\[
\beta\ge
\frac{c^3(A+B)^2\tau^4}{Q_*(\tau)^4}.
\]
Hence the number of covers in a band $\vartheta/2<\tau\le\vartheta$
is at most
\[
\frac{C((1-\gamma)^{-1}+\sqrt{A+B})H}{A+B}
\min\left\{513^4\vartheta^{-4},
\left(\frac\rho\vartheta+F\right)^4\right\}.
\]
Indeed,~\eqref{eq:clock-progress} supplies a lower bound on each
relative decrease throughout the band; a total log decrease of
$\log2$ leaves the band. An additional single crossing step is
absorbed in the displayed bound.

Sum these bands down to $u$. For $\vartheta\ge513/F$, their
contributions form a geometric sum bounded by $CF^4$ times the
common prefactor. For $\rho/F\le\vartheta\le513/F$, each
contribution is at most $CF^4$ times that prefactor, and there are
at most $C\log(e/\rho)$ bands. Below $\rho/F$, the geometric
sum is at most $C\rho^4u^{-4}$ times the prefactor.
Thus the count is bounded by
\[
\frac{C((1-\gamma)^{-1}+\sqrt{A+B})H}{A+B}
\left(\rho^4u^{-4}+F^4\log\frac e\rho+(A+B)^4\right).
\]
The restriction on $u$ and $\rho=(L+1)(1-\gamma)$ imply
$Fu/\rho\le C$ and $(A+B)u/\rho\le C$.
They absorb both additional terms into~\eqref{eq:cover-count}.
When $\rho=1$, use $q\le513$ directly in every band. The resulting
geometric sum is $Cu^{-4}$, and the same bound for warm covers is
absorbed by the restriction on $u$.
\end{proof}

\begin{proof}[Proof of Theorem~\ref{thm:last-iterate}]
Intersect the events of Lemmas~\ref{lem:local-learning}
and~\ref{lem:covers}, each with failure probability $\delta/2$.
Their logarithms are bounded by numerical multiples of $\Lambda_t$.
Fix any integer $T\ge1$ on this event, and put $H=C\sqrt{\Lambda_T}$.
Every cover beginning by time $T$ has length at most
$B_T=C(L+1)\Lambda_T$.

Let $g\in(0,(1-\gamma)^{-1}]$ be a target gap, and take
$u=g(1-\gamma)^2/[C(A+B)H]$, with the constant large enough for
Lemma~\ref{lem:temperature-count}. If $\tau_T>u$, all completed
covers up to $T$ start above $u$ and obey~\eqref{eq:cover-count}.
There is at most one additional unfinished cover. Multiplying that
count by $B_T$ and absorbing the extra cover shows
\[
T\le
\frac{C(A+B)^3}{(1-\gamma)^8}
\left(\frac1{1-\gamma}+\sqrt{A+B}\right)
(L+1)\rho^4\log\frac e\rho\;
\Lambda_T^{7/2}g^{-4}.
\]
Therefore whenever $T$ exceeds this expression,
$\tau_T\le u$ and~\eqref{eq:gap-temperature} gives
$\operatorname{Gap}(x_T,y_T)\le g$.
Inverting the displayed implication, with a larger numerical constant,
gives the sharper bound
\begin{equation}
\label{eq:sharp-main-rate}
\operatorname{Gap}(x_T,y_T)
\le
\frac{C(A+B)^{3/4}}{(1-\gamma)^2}
\left[
\left(\frac{1}{1-\gamma}+\sqrt{A+B}\right)
(L+1)\rho^4\log\frac{e}{\rho}
\right]^{1/4}
\frac{\Lambda_T^{7/8}}{(T+1)^{1/4}}.
\end{equation}
If the right-hand side exceeds $(1-\gamma)^{-1}$, the inequality
follows from the elementary bound that all unnormalized values lie in
$[0,(1-\gamma)^{-1}]$. This also covers $T=0$, since
$\rho\ge1-\gamma$ makes the right-hand side at least
$(1-\gamma)^{-1}$ after increasing $C$. Replacing $T$ by $T+1$
only changes a numerical constant for $T\ge1$.

To simplify the parameter dependence, use $\rho\le1$ and
\[
\frac{1}{1-\gamma}+\sqrt{A+B}
\le \frac{2(A+B)}{1-\gamma}.
\]
Substituting into~\eqref{eq:sharp-main-rate} gives
\[
\operatorname{Gap}(x_T,y_T)
\le
\frac{C(A+B)(L+1)^{1/4}}{(1-\gamma)^{9/4}(T+1)^{1/4}}
\Lambda_T^{7/8}\left(\log\frac{e}{\rho}\right)^{1/4}.
\]
Since $\log(e/\rho)\le\log(e/(1-\gamma))$, absorbing the logarithmic
factors proves~\eqref{eq:main-rate}. The argument holds for every $T$
on the same event. The analytical prefix and confidence level never
enter the updates.
\end{proof}

\section{Proof of Corollary~\ref{cor:rationality}}
\label{app:rationality}

We prove the result for the first player. In this appendix constants
may depend on the fixed game parameters. We work on the event of probability one
on which the opponent's policies converge and all the visiting
bounds below hold. The limiting policy can be random. Whenever it enters
an expression, that expression is used pathwise; conditional expectations
are always taken around the current, predictable opponent policy.

The proof has three steps. First the gains retain enough cumulative
weight for stochastic approximation. Second reciprocal probabilities
make the learner asymptotically greedy against the limiting opponent.
Finally the target and value estimates approach its best response value.

\subsection{Gains and vanishing filtered noise}

\begin{lemma}[Gain bounds against an arbitrary opponent]
\label{lem:rational-gains}
Let $k$ count completed covers of one learner. Its temperature tends to
zero and satisfies, for fixed positive constants,
\[
c_7(k+1)^{-1/4}\le\tau_k\le C_7(k+1)^{-1/5}.
\]
Almost surely $\tau_t\le C(\log(t+2)/(t+1))^{1/5}$ eventually.
At the $n$th visit to any fixed state,
\[
\sum_n v_n=\sum_n\kappa_n=\infty,\qquad
\eta_n\le C\left(\frac{\log(n+2)}{n+1}\right)^{3/5},\qquad
\kappa_n\le C\left(\frac{\log(n+2)}{n+1}\right)^{2/5}
\]
eventually almost surely. The boundary gains
\[
b_k=\frac{\beta_k\tau_k}
{\max\{\tau_k,\|z_k-w_k\|_\infty\}}
\]
satisfy $\sum_k b_k=\infty$.
\end{lemma}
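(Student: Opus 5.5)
The whole lemma should follow from one recursion, the temperature update~\eqref{eq:temperature-update}, combined with the scale bounds of Lemma~\ref{lem:movement}. Neither ingredient uses the opponent's behaviour. Write $K=(1-\gamma)^{-1}+\sqrt{A+B}$.

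\textbf{Step 1: the temperature recursion.} By Lemma~\ref{lem:movement}, all values lie in $[0,1]$, so $\|z-w\|_\infty\le1$. Also $1-\gamma\le q\le513$ and $\tau\le q$. Hence $\max\{\tau,\|z-w\|_\infty\}\in[\tau,1]$, and the relative decrease $1-\tau_{k+1}/\tau_k$ lies between $\beta_k\tau_k/(2K)$ and $\beta_k/(2K)$. Next, $\theta=\tau/q$ lies between $\tau/513$ and $\tau/(1-\gamma)$. For $\tau$ below a game-dependent threshold we have $\varepsilon=(A+B)\theta$, so
\[
\beta=c^3\theta^2\varepsilon^2\asymp\tau^4 .
\]
The upper bound then gives $\tau_k-\tau_{k+1}\le C\tau_k^5$, so $\tau^{-4}$ grows by at most $C$ per cover, which yields $\tau_k\ge c_7(k+1)^{-1/4}$. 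The lower bound gives $\tau_k-\tau_{k+1}\ge c\tau_k^6$, so $\tau^{-5}$ grows by at least $c$ per cover, which yields $\tau_k\le C_7(k+1)^{-1/5}$. The finitely many warm covers with large $\tau$ are absorbed into the constants. In particular $\tau_k\to0$.

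\textbf{Step 2: from covers to rounds and visits.} The proof of Lemma~\ref{lem:covers} uses only the hitting bound under arbitrary adaptive play, so it applies against any opponent. With $\delta_j$ summable, Borel--Cantelli shows that almost surely every cover starting at time $j$ eventually has length at most $C\log(j+2)$. Hence the number of completed covers by round $t$ is eventually at least $ct/\log(t+2)$. Combined with Step 1, this gives $\tau_t\le C(\log(t+2)/(t+1))^{1/5}$ eventually.

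The $n$th visit to a fixed state occurs at a time $t\ge n$, and $\tau$ is nonincreasing, so $\tau$ at that visit is at most $C(\log(n+2)/(n+1))^{1/5}$. The gains then follow from the parameter choices:
\[
\kappa=c\varepsilon^2\le C\tau^2,\qquad
\eta\le\frac{\alpha}{q}\le C\theta\kappa\le C\tau^3 .
\]
These give the stated rates $2/5$ and $3/5$.

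For divergence, each cover contains at least one visit to every state. Since $v\asymp\alpha\theta\asymp\tau^4$, summing one visit per cover gives
\[
\sum_n v_n\ge c\sum_k\tau_k^4\ge c\sum_k(k+1)^{-1}=\infty .
\]
The same argument with $\kappa\asymp\tau^2$ gives $\sum_n\kappa_n=\infty$.

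\textbf{Step 3: the boundary gains.} This is the step I expect to be the obstacle, because the crude bound $b_k\ge\beta_k\tau_k\asymp\tau_k^5\gtrsim(k+1)^{-5/4}$ is summable and therefore useless. Instead I will use the exact identity
\[
b_k=2K\bigl(1-\tau_{k+1}/\tau_k\bigr).
\]
Since $\tau_k\to0$, the telescoped sum $\sum_k\log(\tau_k/\tau_{k+1})$ diverges. Each relative decrement is at most $\beta_k/(2K)\le1/2$, so $\log(\tau_k/\tau_{k+1})\le C(1-\tau_{k+1}/\tau_k)$. Therefore $\sum_k b_k=\infty$.

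The remaining care points are comparing the gains of adjacent covers, which Lemma~\ref{lem:movement} supplies, and working on the almost-sure event where all the ``eventually'' statements hold simultaneously.
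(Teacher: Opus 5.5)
Your proposal is correct and follows essentially the paper's proof. You use the same two-sided bounds on the relative temperature decrease (from $\|z-w\|_\infty\le1$ and $\beta\asymp\tau^4$) to get the $\tau^{-4}$ and $\tau^{-5}$ growth, the same Borel--Cantelli argument on cover lengths to pass from covers to rounds, and the same identity $\tau_{k+1}/\tau_k=1-b_k/\bigl(2((1-\gamma)^{-1}+\sqrt{A+B})\bigr)$ with $\tau_k\to0$ to get $\sum_k b_k=\infty$; your ``one visit per cover'' argument for $\sum_n v_n=\infty$ is just a rephrasing of the paper's observation that at most $n$ covers end before a state's $n$th visit.
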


\begin{proof}
The value and scale bounds and the learner's policy movement bound in
Lemma~\ref{lem:movement} hold against any opponent. If the decreasing
temperature had a positive limit, then $q\in[1-\gamma,513]$ would
keep $\beta$ bounded below. Since $\|z-w\|_\infty\le1$, the
temperature would decrease by a fixed relative amount at every cover,
a contradiction. All covers end almost surely by
Lemma~\ref{lem:covers}.

For a fixed game and small $\tau$, the gain definitions give
\[
\eta\asymp\tau^3,\qquad
v\asymp\beta\asymp\tau^4,\qquad
\kappa\asymp\tau^2.
\]
The relative temperature decrease is between $c\tau^5$ and
$C\tau^4$, using $\|z-w\|_\infty\le1$.
Thus $\tau_k^{-5}$ increases by at least a positive constant,
whereas $\tau_k^{-4}$ increases by at most a constant.
This proves the cover bounds, with the finite initial period absorbed.
The geometric tails in Lemma~\ref{lem:covers} and a summable union
bound imply that, almost surely, a cover starting at time $t$ has
length at most $C\log(t+2)$ eventually. Hence at least
$ct/\log(t+2)$ covers have completed by time $t$, proving the
upper bound in interaction time.

Before a state's $n$th visit, at most $n$ covers can have ended.
Consequently $\tau_n\ge c(n+1)^{-1/4}$,
$v_n\ge c/(n+1)$, and $\kappa_n\ge c/(n+1)^{1/2}$.
Conversely its visit time is at least $n$, which gives the displayed
upper bounds. Finally
\[
\frac{\tau_{k+1}}{\tau_k}
=1-\frac{b_k}{2((1-\gamma)^{-1}+\sqrt{A+B})}.
\]
Since $\tau_k\to0$ and $b_k$ is uniformly small, taking logarithms
shows $\sum_k b_k=\infty$.
\end{proof}

\begin{lemma}[Vanishing noise in stable scalar recursions]
\label{lem:vanishing-noise}
Let predictable gains $a_n\in(0,1)$ satisfy $a_n\to0$ and
$\sum_n a_n=\infty$ almost surely. Let $\xi_n$ be martingale
differences relative to $(\mathcal G_n)$, and write
$\mathbb E_n=\mathbb E[\,\cdot\mid\mathcal G_n]$. Suppose
\[
|\xi_n|\le C\sigma_n,\qquad
\mathbb E_n\xi_n^2\le C\sigma_na_n,
\]
where $\sigma_n$ is predictable and
$\sigma_n\le C(n+1)^{-r}\log^C(n+2)$ eventually almost surely for
some $r>0$. Let $d_n/a_n\to0$ almost surely and $\chi_n\in[0,1]$.
Then each of the following recursions, with a finite initial value,
converges to zero almost surely:
\[
X_{n+1}=\chi_n\bigl((1-a_n)X_n+d_n+\xi_n\bigr),
\qquad
0\le Z_{n+1}\le
\chi_n\bigl[(1-a_n)Z_n+d_n+\xi_n\bigr]_+.
\]
The factors $\chi_n$ and forcing $d_n$ need not be predictable.
\end{lemma}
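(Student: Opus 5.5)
The plan is to reduce both recursions to a deterministic stable averaging part plus a noise process driven only by the martingale differences, and then show that the noise process vanishes. The factors $\chi_n\in[0,1]$ and the positive part can only shrink magnitudes, so everything will be compared pathwise with a dominating recursion.

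\textbf{Step 1: split the recursion.} Define the filtered noise
\[
N_{n+1}=(1-a_n)N_n+\xi_n,\qquad N_0=0.
\]
This sequence uses only predictable gains, so it is a linear filter of martingale differences. Since $d_n/a_n\to0$, the forcing is asymptotically negligible relative to the gain.

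For the first recursion, put $R_n=X_n-N_n$. Then
\[
R_{n+1}=\chi_n\bigl((1-a_n)R_n+d_n\bigr)+(\chi_n-1)\bigl((1-a_n)N_n+\xi_n\bigr).
\]
The last term equals $(\chi_n-1)N_{n+1}$, so it is bounded in absolute value by $|N_{n+1}|$. Hence
\[
|R_{n+1}|\le(1-a_n)|R_n|+|d_n|+|N_{n+1}|.
\]
This is a deterministic stable recursion. If $N_n\to0$, its forcing $|d_n|+|N_{n+1}|$ is not $o(a_n)$, so I would not bound $R$ directly this way. Instead I handle $\chi_n$ more carefully: write $X_{n+1}=\chi_n\bigl((1-a_n)(X_n-N_n)+d_n+N_{n+1}\bigr)$. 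Since $\chi_n\in[0,1]$,
\[
|X_{n+1}|\le\bigl|(1-a_n)(X_n-N_n)+d_n+N_{n+1}\bigr|.
\]
Now set $U_n=|X_n-N_n|$. Then $U_{n+1}\le(1-a_n)U_n+|d_n|+2|N_{n+1}|$, and the forcing is again not $o(a_n)$.

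The cleaner route is a sign-pattern argument. Let $Y_n=|X_n|$. Then
\[
Y_{n+1}\le\bigl|(1-a_n)X_n+d_n+\xi_n\bigr|.
\]
The dominating absolute-value recursion $Y'_{n+1}=|(1-a_n)Y'_n\cdot\mathrm{sgn}+\cdots|$ is nonlinear. I would therefore restart the noise filter. For $m$ large, on $n\ge m$ compare $|X_n|$ with $\prod_{j=m}^{n-1}(1-a_j)|X_m|+\sup_{m\le k\le n}|N^{(m,k)}_n|+\sup_{j\ge m}|d_j|/a_j$. Here $N^{(m,k)}_n=\sum_{j=k}^{n-1}\prod_{i=j+1}^{n-1}(1-a_i)\xi_j$ is the filtered noise started at the last time $k$ at which $\chi$ truncated. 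A truncation with $\chi<1$ only moves the state toward zero. Inductively, $|X_n|$ is bounded by a fading initial term, a $\sup_j|d_j|/a_j$ term, and the maximum over restart times of the tail-filtered noise.

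\textbf{Step 2: uniform vanishing of restarted filtered noise.} Show $\sup_{k\le n}|N^{(k)}_n|\to0$ almost surely, where $N^{(k)}$ is the filter started at time $k$. I would use an exponential supermartingale bound of Freedman type. The increments are bounded by $C\sigma_n$ and the conditional variance is bounded by $C\sigma_na_n$, so the accumulated variance of $N^{(k)}_n$ is bounded by $C\sup_{j\ge k}\sigma_j$ via $\sum_j a_j\prod(1-a_i)^2\le1$. With $\sigma_n\le Cn^{-r}\log^Cn$, a Freedman bound at level $\epsilon$ gives failure probability $\exp(-c\epsilon^2/\sigma_k)$ per pair $(k,n)$. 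That is superpolynomially small in $k$, so a union bound over polynomially many pairs is summable, and Borel--Cantelli applies. Predictability of $\sigma_n$ and $a_n$ is used here; since they are random, I would localize on events where $\sigma_n$ is below the deterministic envelope from some time on.

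\textbf{Step 3: the positive-part recursion.} For $Z_n$, note that $[\cdot]_+\le|\cdot|$ and $Z\ge0$, so the same domination applies verbatim.

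\textbf{Main obstacle.} The hardest step is Step 2's uniformity over restart times with random predictable gains. The key point I rely on is that $\sigma_k$ decays polynomially, which makes the Gaussian-type tails beat a polynomial union bound.
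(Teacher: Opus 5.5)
Your Steps~1 and~3 are sound in substance. They use the same device as the paper: summation by parts against weights that are monotone in the noise index. With $\omega_k=\prod_{i=k}^{n-1}\chi_i$ (nondecreasing in $k$) and $\omega_{m-1}=0$, the noise part of $X_n$ equals $\sum_{k=m}^{n-1}(\omega_k-\omega_{k-1})N^{(k)}_n$. This is a combination with nonnegative coefficients summing to at most one. That identity, rather than an induction on ``the last truncation time'', is the correct justification when $\chi_n\in(0,1)$. Likewise, the reflected recursion is not handled by $[\cdot]_+\le|\cdot|$. Instead, write $Z_{n+1}=\hat\chi_n\bigl((1-a_n)Z_n+d_n+\xi_n\bigr)$ with a non-predictable $\hat\chi_n\in[0,1]$, and apply the $X$ case.

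The genuine gap is Step~2, which carries all of the probability.

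\emph{The union bound does not close.} Your per-pair bound $\exp(-c\epsilon^2/\sigma_k)$ does not depend on $n$, and each $k$ is paired with infinitely many $n$. So the union over pairs diverges; there are not polynomially many pairs. No $n$-dependent refinement rescues a union over individual times in this generality. The gains may be negligible on arbitrarily long stretches, during which $N^{(k)}_n$ is essentially frozen, so a per-time union bound counts the same event arbitrarily often. You need a maximal inequality in $n$, but $n\mapsto N^{(k)}_n$ is a damped process, not a martingale.

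\emph{Freedman's inequality does not apply as written.} For fixed $n$, the weights $\prod_{i=j+1}^{n-1}(1-a_i)$ involve future gains and are not $\mathcal G_j$-measurable. Hence $\sum_{j}\bigl(\prod_{i=j+1}^{n-1}(1-a_i)\bigr)\xi_j$ is not a martingale transform when the gains are random.

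The missing idea is to apply concentration only to raw partial sums on blocks of unit gain, and to handle every weight pathwise.

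\emph{Blocks.} Split the late indices into consecutive blocks on which the cumulative gain first reaches one. Their endpoints are stopping times because $a_n$ is predictable, and each block has gain at most $2$.

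\emph{Concentration on raw sums.} On a block starting at $m$, the increments and the total conditional variance of $\sum_{n=m}^{j}\xi_n$ are $O(m^{-r}\log^C(m+2))$. An exponential maximal inequality then bounds the probability that some partial sum in the block exceeds $\epsilon$ by $2\exp\{-c\epsilon^2m^r/[(1+\epsilon)\log^C(m+2)]\}$. This is summable over all integers $m$, so Borel--Cantelli makes the block maxima vanish.

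\emph{Weights by summation by parts.} The coefficient of $\xi_i$ in $X_{j+1}$, namely $\chi_i\prod_{k=i+1}^{j}\chi_k(1-a_k)$, lies in $[0,1]$ and is nondecreasing in $i$. Summation by parts therefore bounds each weighted in-block sum by twice the maximal raw partial sum. This removes the damping weights and the non-predictable $\chi_n$ at once.

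\emph{Across blocks.} Contributions from earlier blocks contract by at least $e^{-1}$ per complete block, and the forcing contributes $o(1)$ per block.
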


\begin{proof}
We give the argument for weighted noise, including history dependent
factors. Partition sufficiently late indices into blocks on which
the cumulative gain first reaches one. Each block has gain at most
$1+\sup a_n\le2$. For a block starting at $m$, the conditional
variance of its unweighted martingale sum is at most
$C m^{-r}\log^C(m+2)$ and each increment has the same upper bound
in absolute value, with constants enlarged if necessary.
The exponential martingale inequality for bounded increments gives,
for any fixed $\epsilon>0$,
\[
\mathbb P\left(\max_{j\text{ in the block}}
\left|\sum_{n=m}^{j}\xi_n\right|>\epsilon\right)
\le 2\exp\left\{
-\frac{c\epsilon^2 m^r}{(1+\epsilon)\log^C(m+2)}\right\}.
\]
This inequality follows from
$\mathbb E_n e^{\lambda\xi_n}\le
\exp(C\lambda^2\mathbb E_n\xi_n^2)$ for
$\lambda\max|\xi_n|\le1/2$, followed by the maximal inequality
and optimization in $\lambda$.
Its right side is summable over all integer starting points $m$.
If the eventual bounds have random constants, first stop upon their
failure for deterministic constants and a deterministic starting
index, then take the countable union of these localizations.
Borel--Cantelli implies that the maximum partial sum in every late
gain block tends to zero.

Within a block ending at $j$, the coefficient of the noise at index
$i\le j$ in the solution of the recursion is
\[
w_{i,j}=\chi_i\prod_{k=i+1}^{j}\chi_k(1-a_k).
\]
It belongs to $[0,1]$, and
$w_{i,j}=\chi_i(1-a_{i+1})w_{i+1,j}\le w_{i+1,j}$.
Thus these coefficients have total variation at most one as a
function of the noise index, even when the factors depend on later
observations.
Summation by parts bounds each weighted sum by twice the maximum
unweighted partial sum. Across complete blocks the old value
contracts by at least $e^{-1}$. The forcing $o(a_n)$ contributes
$o(1)$ per block. These facts imply convergence to zero, first at
block endpoints and then inside every block. For a recursion reflected at zero,
unrolling it takes the maximum over
suffixes of the same weighted sums; a suffix partial sum is bounded
by twice the corresponding prefix maximum. The same argument applies.
\end{proof}

\subsection{Reciprocal probabilities and asymptotic greediness}

\begin{lemma}[Greediness against a limiting opponent]
\label{lem:greediness}
Suppose $y_t\to y_\infty$ statewise almost surely. For every state,
the first player satisfies
\[
\max_{a\in\mathcal A}[M_s(w_t)y_\infty(\cdot\mid s)]_a
-x_t(\cdot\mid s)^\top M_s(w_t)y_\infty(\cdot\mid s)\longrightarrow0
\qquad\text{almost surely}.
\]
\end{lemma}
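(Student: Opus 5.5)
The plan is to prove the statement for one fixed state $s$ by tracking \emph{differences of reciprocal action probabilities}, which is the quantity the log-barrier update moves linearly. The sampled losses then enter only through their importance-weighted means, and the normalizer cancels.

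\textbf{Step 1: reciprocal recursion.} Index the visits to $s$ by $n$ and let $p=x_n(\cdot\mid s)$. The optimality conditions in the proof of Lemma~\ref{lem:movement} give
\[
\frac1{p_i^+}=\frac{1-v_n+\eta_n Z_n\mathbf1\{i=i_n\}}{p_i}+\lambda_n-\nu_i .
\]
For two actions $a,b$, subtracting removes $\lambda_n$. Write $D_n=1/p_a-1/p_b$; assume for now that coordinate $a$ is not at the floor and $\nu_b=0$. Conditionally on $\mathcal G_n$, $\mathbb E_n[Z_n\mathbf1\{i=i_n\}/p_i]=F_i(p)=-(M_s(w_n)y_n)_i+\text{const}$. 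Hence
\[
D_{n+1}=(1-v_n)D_n+\eta_n\bigl[(M_s(w_n)y_n)_b-(M_s(w_n)y_n)_a\bigr]+\eta_n\xi_n'+(\text{floor terms}).
\]
Here $\xi_n'$ is centered.

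\textbf{Step 2: normalized pairwise recursion.} Let $g_n=(M_s(w_n)y_\infty)_b-(M_s(w_n)y_\infty)_a$, with $b$ a maximizer of $M_s(w_n)y_\infty$. Set
\[
X_n=\bigl[g_n-\tau_nD_n\bigr]_+ ,
\]
using $v_n/\eta_n=\tau_n$. Multiplying Step~1 by $\tau_n$ gives a recursion of the form $X_{n+1}\le[(1-v_n)X_n+d_n+\xi_n]_+$, with the following forcing contributions:
\begin{itemize}
\item $v_n\|y_n-y_\infty\|$, which is $o(v_n)$ pathwise;
\item the drift of $g_n$ from the value update, $\gamma\beta_n\tau_n$; since $\beta_n\le2cv_n$, this is $O(v_n\tau_n)=o(v_n)$;
\item the change of $\tau$, a relative factor $1+O(\beta)$ applied to $\tau D$. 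This needs $\tau_nD_n$ bounded, which holds since $p\ge h\gtrsim\tau$, so it is also $o(v_n)$.
\end{itemize}
The noise is $\xi_n=\tau_n\eta_n(\cdot)$. Using $p_i\ge h\asymp\tau$ and the gain scalings $\eta\asymp\tau^3$, $v\asymp\tau^4$ from Lemma~\ref{lem:rational-gains}, we get $|\xi_n|\lesssim\tau_n^3=:\sigma_n$ and $\mathbb E_n\xi_n^2\lesssim\tau^2\eta^2q^2/p\lesssim\tau^7\asymp\sigma_nv_n$. The polynomial decay of $\sigma_n$ also comes from Lemma~\ref{lem:rational-gains}, and $\sum v_n=\infty$ holds. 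Lemma~\ref{lem:vanishing-noise}, applied to the reflected recursion, then gives $X_n\to0$. Because $1/p_b\ge1$, this yields $\liminf\tau_n/p_a\ge g_n-o(1)$. So whenever $g_n\ge\epsilon$ eventually, $p_a\le\tau_n/(\epsilon-o(1))\to0$.

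\textbf{Step 3: conclusion.} The greedy gap equals $\sum_ap_a\,g^{(a)}_n$, with each $g^{(a)}_n\in[0,q]$. Split the actions into those with $g^{(a)}_n<\epsilon$, contributing at most $\epsilon$, and the rest, whose probabilities are $O(\tau_n/\epsilon)\to0$. Letting $\epsilon\downarrow0$ gives convergence along visits. Between visits $x_t(\cdot\mid s)$ is frozen, and $w_t$ moves by $o(1)$, which transfers the statement to all rounds $t$.

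\textbf{Main obstacle.} The hardest part is the bookkeeping for the floor and for the moving maximizer. The maximizer $b$ may change with $w_n$, and $w_t$ need not converge. To handle this, I would run the argument simultaneously for all ordered pairs $(a,b)$ with the time-varying $g^{(ab)}_n$. Since there are finitely many pairs, the union is harmless, and using the currently best $b$ at each time suffices.

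For the floor, I would first try the following. If $a$ is at the floor, then $p_a=h\to0$ already. A multiplier $\nu_b>0$ only occurs when $p_b=h$, which is impossible for a near-best action once the pair inequalities hold. The sign of $\lambda_n$ is irrelevant because it cancels in $D_n$. The floor-increase term $Cc\beta\theta$ from Lemma~\ref{lem:movement} enters the forcing only as $o(v_n)$ after multiplication by $\tau$.
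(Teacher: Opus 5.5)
Your route is the paper's: reciprocal probabilities, pairwise differences to remove the simplex multiplier, Lemma~\ref{lem:vanishing-noise} on a reflected scalar recursion, then splitting the regret over actions. But Step~2 has two genuine gaps.

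\textbf{The temperature rescaling is not $o(v_n)$.} Put $\rho_n=\tau_{n+1}/\tau_n$ and $\bar g_n=(M_s(w_n)y_n)_b-(M_s(w_n)y_n)_a$. Then $\tau_{n+1}D_{n+1}=\rho_n[(1-v_n)\tau_nD_n+v_n\bar g_n+\cdots]$. Here $1-\rho_n$ can equal $\beta_n/(2((1-\gamma)^{-1}+\sqrt{A+B}))$ at every visit, for example for a state seen once per cover with $\|z-w\|_\infty\le\tau$. Since $\beta_n/v_n=c(1+\alpha_n\theta_n)\to c$, this is $\Theta(v_n)$. The bound $\beta_n\le2cv_n$ gives $o(v_n)$ only when an extra vanishing factor is present, as $\tau_n$ is in your second bullet. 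Here $\tau_nD_n=u_a-u_b$ is of order one precisely when $a$ is suboptimal. The resulting forcing $(1-\rho_n)g_n\ge0$ has the unfavorable sign. For frozen $\rho,v,g$, the noiseless fixed point of $u_a-u_b$ is $\omega g$ with $\omega=\rho v/(1-\rho+\rho v)<1$, so $X_n=[g_n-\tau_nD_n]_+$ need not tend to zero. The repair is the paper's analytical weight $\omega_{n+1}=\rho_n[(1-v_n)\omega_n+v_n]\in[\omega_*,1]$. With $X_n=[\omega_ng_n-\tau_nD_n]_+$, the $\rho_n$ drift cancels exactly whatever the sign of $g^{(ab)}_n$. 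Your conclusion then becomes $p_a\le\tau_n/(\omega_*\epsilon-o(1))$, which still suffices.

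\textbf{The floor argument is circular and aimed at the wrong multiplier.} In $D=1/p_a-1/p_b$, a positive $\nu_b$ lowers $1/p_b$ and hence lowers $X$, so it is harmless. The harmful multiplier is $\nu_a$: it caps $u_a=\tau/p_a$ at $\tau/h$ and raises $X$. Observing that $p_a=h\to0$ handles that visit but not the recursion afterwards. If $u_b$ is within $g$ of the cap, $X$ jumps to about $g$. Your reason why this cannot happen for a near-best $b$ invokes the pair inequalities you are trying to prove. Lower bounds on $u_a-u_b$ are not preserved by a common cap. Upper bounds on differences with a nonnegative threshold are preserved, because $\min\{K,x_i\}-\min\{K,x_j\}\le(x_i-x_j)_+$. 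The paper uses exactly this first: it shows $(u_{n,i}-u_{n,j}-q_n)_+\to0$ for all pairs. Together with $\min_iu_{n,i}\le d\tau_n\to0$, this gives $\max_iu_{n,i}\le q_n+o(1)$. An active floor would instead force $u_{n+1,i}=c^{-1}\rho_n\max\{q_n,(A+B)\tau_n\}$, a contradiction for small $c$. Hence the floor is eventually inactive for every coordinate, and only then do your reweighted pair recursions hold exactly. You need this preliminary step, or an equivalent non-circular one, before Step~2.
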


\begin{proof}
Fix a state, use its local visit index $n$, and write $p_n$ for the
learner's distribution. Put
\[
u_{n,i}=\frac{\tau_n}{p_{n,i}},\qquad
\rho_n=\frac{\tau_{n+1}}{\tau_n}.
\]
The old floor implies $u_{n,i}\le C$ for this fixed game.
Moreover $1-Cv_n\le\rho_n\le1$ by the boundary budgets.
Let $F_{n,i}$ be the conditional mean centered loss coordinate
against the current opponent, and let $\widehat F_{n,i}$ be its
importance estimate. The optimality conditions give
\begin{equation}
\label{eq:reciprocal-step}
u_{n+1,i}=\min\left\{
\frac{\rho_n\tau_n}{h_n},\;
\rho_n\bigl((1-v_n)u_{n,i}+v_n\widehat F_{n,i}
+\tau_n\lambda_n\bigr)\right\}.
\end{equation}
Here $\lambda_n$ is the simplex multiplier. It is common to all
coordinates, which is why differences are useful.

We first show that the floor is eventually inactive. Taking the
positive part of a coordinate difference in~\eqref{eq:reciprocal-step},
using the common cap and $\rho_n\le1$, gives
\[
(u_{n+1,i}-u_{n+1,j})_+
\le\left[(1-v_n)(u_{n,i}-u_{n,j})
+v_n(F_{n,i}-F_{n,j})+\xi_{n,ij}\right]_+,
\]
where
$\xi_{n,ij}=v_n[(\widehat F_{n,i}-\widehat F_{n,j})
-(F_{n,i}-F_{n,j})]$ is centered.
The floor and bounded observations imply
\[
|\xi_{n,ij}|\le C\eta_n,\qquad
\mathbb E_n\xi_{n,ij}^2\le C\eta_nv_n.
\]
The conditional loss range is at most $q_n$, and
$|q_{n+1}-q_n|\le C\beta_n\tau_n=o(v_n)$.
It follows that $W_n=(u_{n,i}-u_{n,j}-q_n)_+$ satisfies
\[
W_{n+1}\le[(1-v_n)W_n+\xi_{n,ij}+o(v_n)]_+.
\]
Lemmas~\ref{lem:rational-gains} and~\ref{lem:vanishing-noise}
give $W_n\to0$. Since $\min_i u_{n,i}\le d\tau_n\to0$,
we have $\max_i u_{n,i}\le q_n+o(1)$.
An active floor at visit $n$ would instead give, at the next visit,
\[
u_{n+1,i}=\frac{\rho_n\tau_n}{h_n}
=c^{-1}\rho_n\max\{q_n,(A+B)\tau_n\}.
\]
This contradicts the preceding upper bound for all large $n$,
because $q_n\ge1-\gamma>0$, $\rho_n\to1$, and $c$ is small.
Thus~\eqref{eq:reciprocal-step} is eventually uncapped.

Let $f_{n,i}$ be the centered loss coordinate obtained by replacing
$y_n$ with $y_\infty$, at the same $w_n$.
The convergence of the opponent implies
$F_{n,i}-f_{n,i}\to0$ uniformly in $i$.
Although $f_n$ need not be predictable, this is a pathwise error;
the martingale is still centered at $F_n$.
The fixed limiting opponent and the boundary budgets imply
$|f_{n+1,i}-f_{n,i}|=\mathcal O(\beta_n\tau_n)=o(v_n)$.
Fix a coordinate $i_0$ and, after the floor has become inactive,
define an analytical scalar $\omega_n$ by
\[
\omega_{n+1}=\rho_n[(1-v_n)\omega_n+v_n],\qquad
\omega_{n_0}=1.
\]
It lies in $[\omega_*,1]$ for a fixed $\omega_*>0$:
the lower inequality $\rho_n\ge1-Cv_n$ makes any sufficiently
small positive $\omega_*$ invariant for all late $n$.
Subtracting the uncapped coordinate recursions shows that
\[
E_{n,i}=u_{n,i}-u_{n,i_0}
-\omega_n(f_{n,i}-f_{n,i_0})
\]
obeys
$E_{n+1,i}=\rho_n[(1-v_n)E_{n,i}+\xi_{n,ii_0}+o(v_n)]$.
The final $o(v_n)$ also includes the change of $f_n$; $\rho_n$
is bounded away from zero. Lemma~\ref{lem:vanishing-noise} gives
$E_{n,i}\to0$ for all $i$.

Choose a coordinate $j$ minimizing $f_n$. The preceding identity
for differences implies
\[
\omega_n(\langle p_n,f_n\rangle-f_{n,j})
\le \sum_i p_{n,i}u_{n,i}+2\max_i|E_{n,i}|
=d\tau_n+2\max_i|E_{n,i}|\longrightarrow0.
\]
Since $\omega_n\ge\omega_*$, the loss regret tends to zero.
These losses are negatives of rewards up to a common shift, proving
the assertion at visits. Between visits the policy is fixed and
there is at most one value correction of size $\beta\tau\to0$,
which extends the assertion to every interaction time.
\end{proof}

\subsection{Convergence of the target and value estimates}

\begin{proof}[Proof of Corollary~\ref{cor:rationality}]
Let $\mathcal T_\infty$ be the first player's best response Bellman
operator with $y_\infty$ fixed, and let its normalized fixed point
be $w_\infty^*$. At a fixed state's visits write
\[
\mu_n=x_n^\top M_s(w_n)y_n,\qquad
\widetilde\mu_n=x_n^\top M_s(w_n)y_\infty.
\]
Then $\mu_n-\widetilde\mu_n\to0$.
Lemma~\ref{lem:movement} applied to the learner's own policy, while
keeping the limiting opponent fixed, gives
\[
|\widetilde\mu_{n+1}-\widetilde\mu_n|
\le C\eta_n q_n^2+C\beta_n\tau_n=o(\kappa_n).
\]
Thus the averaging recursion for $z_n-\widetilde\mu_n$ has
contraction $1-\kappa_n$, forcing $o(\kappa_n)$, and martingale
increment $\kappa_n(Y_n-\mu_n)$.
Its absolute value is at most $\kappa_n$ and its conditional
variance at most $\kappa_n^2$.
Lemmas~\ref{lem:rational-gains} and~\ref{lem:vanishing-noise}
therefore show $z_n-\widetilde\mu_n\to0$.
As in the preceding lemma, the bounds over partial intervals extend this
to all times. Lemma~\ref{lem:greediness} now yields
\[
e_k:=\|z_k-\mathcal T_\infty w_k\|_\infty\longrightarrow0
\]
at cover boundaries, using the estimates after the closing sample.

We verify convergence of the clipped value correction without imposing
a rate on $e_k$. Let $Q_k=\|w_k-w_\infty^*\|_\infty$.
The scalar map $u+\beta\operatorname{clip}(v-u,-\tau,\tau)$ is
nondecreasing in both arguments when $\beta\le1$.
Contraction gives
$z_k\le w_\infty^*+\gamma Q_k\mathbf1+e_k\mathbf1$, with the
symmetric lower bound. Consequently, if
$(1-\gamma)Q_k\ge e_k$, then
\[
Q_{k+1}\le Q_k-
\beta_k\min\{\tau_k,(1-\gamma)Q_k-e_k\}.
\]
For a fixed $\epsilon>0$, once $e_k\le(1-\gamma)\epsilon/2$
and $\tau_k\le(1-\gamma)\epsilon/2$, the box
$\|w-w_\infty^*\|_\infty\le\epsilon$ is invariant.
While $Q_k>\epsilon$, the displayed inequality decreases $Q_k$
by $\beta_k\tau_k$. Furthermore
\[
\|z_k-w_k\|_\infty
\ge(1-\gamma)Q_k-e_k
\ge(1-\gamma)\epsilon/2.
\]
Therefore $\beta_k\tau_k\ge
((1-\gamma)\epsilon/2)b_k$ during this period.
Since $\sum_k b_k=\infty$, the process must enter that box.
It follows that $w_k\to w_\infty^*$, and hence $w_t\to w_\infty^*$.

Finally let $g_t$ be the maximum local greediness error in
Lemma~\ref{lem:greediness}. The normalized Bellman evaluation operator
for $(x_t,y_\infty)$ satisfies
\[
\|\mathcal T_\infty w_\infty^*
-\mathcal T_{x_t,y_\infty}w_\infty^*\|_\infty
\le g_t+2\gamma\|w_t-w_\infty^*\|_\infty\longrightarrow0.
\]
Contraction of the evaluation operator bounds the distance between its
fixed point and $w_\infty^*$ by this quantity divided by $1-\gamma$.
Dividing once more by $1-\gamma$ to recover unnormalized values proves
the stated best response convergence.

For the second player, interchange the players and replace the reward
by $1-r_t$ and the estimates by $1-z_t$ and $1-w_t$.
The transformed TD target is $1-Y_t$, and the first player's loss formula
in this game equals the original second player's loss.
The span, discrepancy norm, temperature, and all learning rates are
unchanged, and the averaging and clipped value updates commute with
this transformation. The argument for the first player therefore applies
to the transformed game, whose rewards also lie in $[0,1]$.
Since the transformed value equals $(1-\gamma)^{-1}-V^{x,y}$,
its best response conclusion is precisely the required statement for
the second player.
\end{proof}